\newcommand{\argmin}{\mathop{\rm arg~min}\limits}
\DeclareMathOperator\supp{supp}
\newtheorem{theo}{Theorem}[section]
\newtheorem{prop}[theo]{Proposition}
\newtheorem{lemm}{Lemma}[section]
\newtheorem{proof}{Proof}
\newtheorem{defi}{Definition}[section]
\newcommand{\captionfonts}{\normalsize}
\long\def\@makecaption#1#2{%
  \vskip\abovecaptionskip
  \sbox\@tempboxa{{\captionfonts #1: #2}}%
  \ifdim \wd\@tempboxa >\hsize
    {\captionfonts #1: #2\par}
  \else
    \hbox to\hsize{\hfil\box\@tempboxa\hfil}%
  \fi
  \vskip\belowcaptionskip}
\begin{document}
\hspace{13.9cm}1

\ \vspace{20mm}\\

{\LARGE Information Geometrically Generalized Covariate Shift Adaptation}

\ \\
{\bf \large Masanari Kimura$^{\displaystyle 1}$, Hideitsu Hino$^{\dagger, \displaystyle 2, \displaystyle 3}$}\\
{$^{\displaystyle 1}$SOKENDAI, Graduate University for Advanced Studies.\\
Shonan Village, Hayama,
Kanagawa 240-0193 Japan}\\
{$^{\displaystyle 2}$The Institute of Statistical Mathematics.\\
10-3 Midori-cho, Tachikawa, Tokyo 190-8562, Japan}\\
{$^{\displaystyle 3}$Center for 
Advanced Intelligence Project, RIKEN.\\
1-4-1 Nihonbashi, Chuo-ku, Tokyo 103-0027, Japan}\\
$^{\dagger}$corresponding author: mkimura@ism.ac.jp
%

{\bf Keywords:} Information Geometry, Domain Adaptation, Covariate Shift

\thispagestyle{empty}
\markboth{}{NC instructions}
\ \vspace{-0mm}\\
%
\begin{center} {\bf Abstract} \end{center}
Many machine learning methods assume that the training and test data follow the same distribution. However, in the real world, this assumption is very often violated. In particular, the phenomenon that the marginal distribution of the data changes is called covariate shift, one of the most important research topics in machine learning. We show that the well-known family of covariate shift adaptation methods is unified in the framework of information geometry. Furthermore, we show that parameter search for geometrically generalized covariate shift adaptation method can be achieved efficiently. Numerical experiments show that our generalization can achieve better performance than the existing methods it encompasses.

\section{Introduction}
When considering supervised learning methods, it is often assumed that the training and test data follow the same distribution~\citep{Bishop1995-mz, Duda2006-rd, Hastie2009-is, Vapnik2013-nm, Mohri2018-rw}.
However, this common assumption is violated in the real world in most cases~\citep{Huang2007-bm,Zadrozny2004-sa,Cortes2008-zd,Quionero-Candela2009-vg,Jiang2008-dd}.

Covariate shift~\citep{Shimodaira2000-vv} is a prevalent setting
for supervised learning in the real world, where the input distribution differs in the training and test phases, but the conditional distribution of the output variable given the input variable remains unchanged.
Covariate shift is a commonly observed phenomenon in real-world machine learning applications, such as emotion recognition~\citep{Hassan2013-nf,Jirayucharoensak2014-df}, 3D pose estimation~\citep{Yamada2012-ov}, brain computer interfaces~\citep{Li2010-pe,Raza2016-py}, spam filtering~\citep{Bickel2009-wq}, and human activity recognition~\citep{Hachiya2012-vo}.
In addition, there has been recent discussion on the relationship between covariate shift and the robustness of deep learning~\citep{Ioffe2015-ns, Arpit2016-ao, Santurkar2018-wy, Nado2020-ts, Huang2020-zx, Awais2020-wv}.

Ordinary empirical risk minimization (ERM)~\citep{vladimirvapnik1998, Vapnik2013-nm} may not generalize well to the test data under covariate shift because of the difference between the training and test distributions.
However, importance weighting for training examples has been shown to be effective in mitigating the effect of covariate shift~\citep{Shimodaira2000-vv, Sugiyama2005-lv, Sugiyama2005-nr, Zadrozny2004-sa}.
The main idea of these strategies is weighting the training loss terms according to their importance, which is the ratio of the training input density to the test input density. The importance weighting is widely adopted even in modern covariate shift studies with deep neural networks (DNN)~\citep{DBLP:conf/nips/FangL0S20,DBLP:journals/sncs/ZhangYLS21}. 

In this paper, we consider the generalization of these methods in the framework of information geometry~\citep{Amari1985-mi, Amari2007-wb}, a tool that allows us to deal with probability distributions on Riemannian manifolds.
This generalization makes it possible to search for good weighting without searching for a large number of parameters.
Our contributions is summarized as follows:
\begin{itemize}
    \item (Section~\ref{subsec:information_geometrically_generalized_iwerm} and ~\ref{subsec:geometric_bias}) We generalize existing methods of covariate shift adaptation in the framework of information geometry. By our information geometrical formulation, geometric biases of conventional methods are elucidated. 
    \item (Section~\ref{subsec:optimization_of_the_generalized_iwerm}) We show that our geometrically generalized covariate shift adaptation method has a much larger solution space than existing methods controlled by only two parameters. Efficient weighting is obtained by searching for parameters using an information criterion or Bayesian optimization.
    \item (Section~\ref{sec:numerical_experiments}) Numerical experiments show that our generalization can achieve better performance than the existing methods it encompasses.
\end{itemize}


\section{Preliminaries}

\subsection{Problem formulation}
First, we formulate the problem of supervised learning.
We denote by $\mathcal{X}\subset\mathbb{R}^d$ the input space.
The output space is denoted by $\mathcal{Y}\subset\mathcal{R}$ (regression) or $\mathcal{Y}\subset\{1,\dots,K\}$ ($K$-class classification).
We assume that training examples $\{(\bm{x}^{tr}_i, y^{tr}_i)\}^{n_{tr}}_{i=1}$ are independently and identically distributed (i.i.d.) according to some fixed but unknown distribution $p_{tr}(\bm{x}, y)$, which can be decomposed into the marginal distribution and the conditional probability distribution, i.e., $p_{tr}(\bm{x},y)=p_{tr}(\bm{x})p_{tr}(y|\bm{x})$.
We also denote the test examples by $\{(\bm{x}^{te}_i,y^{te}_i)\}^{n_{te}}_{i=1}$ drawn from a test distribution $p_{te}(\bm{x}, y) = p_{te}(\bm{x})p_{te}(y|\bm{x})$.

Let $\mathcal{H}$ be a hypothesis class.
The goal of supervised learning is to obtain a hypothesis $h:\mathcal{X}\to\mathbb{R}\ (h\in\mathcal{H})$ with the training examples that minimizes the expected loss over the test distribution:
\begin{equation}
    \mathcal{R}(h) \coloneqq \mathbb{E}_{(\bm{x}^{te},y^{te})\sim p_{te}(\bm{x},y)}\Big[\ell(h(\bm{x}^{te}), y^{te})\Big], \label{eq:expected_loss}
\end{equation}
where $\ell: \mathbb{R}\times\mathcal{Y}\to\mathbb{R}$ is the loss function that measures the discrepancy between the true output value $y$ and the predicted value $\hat{y}\coloneqq h(\bm{x})$.
In this paper, we assume that $\ell$ is bounded from above, i.e., $\ell(y,y')<\infty\ (\forall y,y'\in\mathcal{Y})$.

\begin{defi}[Covariate shift assumption]
\label{def:covariate_shift_adaptation}
We consider that the two distributions $p_{tr}(\bm{x}, y)$ and $p_{te}(\bm{x},y)$ satisfy the covariate shift assumption if the following three conditions hold: 1) $p_{tr}(\bm{x}) \neq p_{te}(\bm{x})$, 2) $\supp(p_{tr}(\bm{x})) \supset \supp(p_{te}(\bm{x}))$ and 3) $p_{tr}(y|\bm{x}) = p_{te}(y|\bm{x})$.
\end{defi}
Under the covariate shift assumption, the goal of covariate shift adaptation is still to obtain a hypothesis $h$ that minimizes the expected loss~\eqref{eq:expected_loss} by utilizing both labeled training examples $\{(\bm{x}^{tr}_i, y^{tr}_i)\}^{n_{tr}}_{i=1}$ and unlabeled test examples $\{(\bm{x}^{te}_i)\}^{n^{te}}_{i=1}$.

\subsection{Previous works}
Ordinary empirical risk minimization (ERM)~\citep{vladimirvapnik1998,Vapnik2013-nm}, a standard approach in supervised learning, may fail under the covariate shift because it assumes that the training and test data follow the same distribution.
Importance weighting has been shown to be effective in mitigating the effect of covariate shift~\citep{Shimodaira2000-vv,Sugiyama2005-lv,Sugiyama2007-lr,Zadrozny2004-sa}:
\begin{equation}
    \min_{h\in\mathcal{H}}\frac{1}{n_{tr}}\sum^{n_{tr}}_{i=1}w(\bm{x}^{tr}_i)\ell(h(\bm{x}^{tr}_i), y^{tr}_i), \label{eq:weighted_empirical_loss}
\end{equation}
where $w:\mathcal{X}\to\mathbb{R}_{\geq 0}$ is a certain weighting function.
\begin{defi}[IWERM~\citep{Shimodaira2000-vv}]
If we choose the density ratio $p_{te}(\bm{x})/p_{tr}(\bm{x})$ as the weighting function, ERM according to \begin{equation}
    \min_{h\in\mathcal{H}}\frac{1}{n_{tr}}\sum^{n_{tr}}_{i=1}\frac{p_{te}(\bm{x}^{tr}_i)}{p_{tr}(\bm{x}^{tr}_i)}\ell(h(\bm{x}^{tr}_i), y^{tr}_i) \label{eq:iwerm}
\end{equation}
has consistency.
\end{defi}
This is called importance weighted ERM (IWERM).
However, IWERM tends to produce an estimator with high variance.
We can reduce the variance by flattening the importance weights, which is called adaptive IWERM (AIWERM):
\begin{defi}[AIWERM~\citep{Shimodaira2000-vv}]
Let $\lambda\in[0,1]$.
If we choose $(p_{te}(\bm{x})/p_{tr}(\bm{x}))^{\lambda}$ as the weighting function, we can obtain the variance-reduced estimator:
\begin{equation}
    \min_{h\in\mathcal{H}}\frac{1}{n_{tr}}\sum^{n_{tr}}_{i=1}\Big(\frac{p_{te}(\bm{x}^{tr}_i)}{p_{tr}(\bm{x}^{tr}_i)}\Big)^\lambda\ell(h(\bm{x}^{tr}_i), y^{tr}_i). \label{eq:aiwerm}
\end{equation}
\end{defi}
Relative IWERM (RIWERM), a stable version of AIWERM, has also been proposed:
\begin{defi}{(RIWERM~\citep{Yamada2011-ws})}
Let $\lambda\in[0,1]$.
If we choose $p_{te}(\bm{x})/\lambda p_{tr}(\bm{x}) + (1-\lambda) p_{te}(\bm{x})$ as the weighting function, we can directly estimate a flattened version of the importance weight:
\begin{equation}
    \min_{h\in\mathcal{H}}\frac{1}{n_{tr}}\sum^{n_{tr}}_{i=1}\frac{p_{te}(\bm{x}^{tr}_i)}{\lambda p_{tr}(\bm{x}^{tr}_i) + (1-\lambda) p_{te}(\bm{x}^{tr}_i)}\ell(h(\bm{x}^{tr}_i), y^{tr}_i). \label{eq:riwerm}
\end{equation}
\end{defi}
All of the above methods are considered as different weighting methods for each point of the training data.
More generally, the method of covariate shift adaptation can be essentially rephrased as a weighting strategy for training data.

\section{Statistical Model and Exponential Family}
Information geometry~\citep{Amari1985-mi, Amari2007-wb} is a powerful framework that allows us to deal with statistical models on Riemannian manifolds. For theoretical investigation, we need the notion of dual connection and curvature tensor associated with Fisher metric, but these details are deferred to the Appendix~\ref{app:mfd} and we here present minimum required definitions and notations. We note that the assumption on the parametric family is only required for the information geometric analysis in Section~\ref{subsec:geometric_bias}. The algorithmic framework of the proposed method is independent of the parametric model.

Since $p_{tr}(y|\bm{x})=p_{te}(y|\bm{x})=p(y|\bm{x})$ from the assumption of Definition~\ref{def:covariate_shift_adaptation},
what we are interested in is the model manifold $(\mathcal{M}, g(\bm{\theta}))$ to which the marginal distribution $p(\bm{x};\bm{\theta})$ belongs:
\begin{equation}
    \mathcal{M} = \Big\{ p(\bm{x}; \bm{\theta})\ ; \bm{\theta}\in\Theta \Big\}.
\end{equation}
Here, $p_{tr}(\bm{x}; \bm{\theta}), p_{te}(\bm{x}; \bm{\theta})\in\mathcal{M}$. We note that elements in $\mathcal{M}$ is specified by its parameter $\bm{\theta}$ and we identify the parameter vector $\bm{\theta}$ to the density function $p(\bm{x}; \bm{\theta})$ and write $p(\bm{x};\bm{\theta}) \simeq \bm{\theta}$ if necessary.
In this paper, we assume that $\mathcal{M}$ is an exponential family and the probability density function can be written as
\begin{equation}
    p(\bm{x}; \bm{\theta}) = \exp\Big\{ \theta^iT_i(\bm{x}) + k(\bm{x}) - \psi(\bm{\theta})\Big\}, \label{eq:exponential_family}
\end{equation}
where $\bm{x}$ is a random variable, $\bm{\theta}=(\theta^1,\dots,\theta^p)$ is an $p$-dimensional vector parameter to specify a distribution, $\bm{T}(\bm{x}) = (T_1(\bm{x}),\dots,T_p(\bm{x}))$ are sufficient statistics of $\bm{x}$, $k(\bm{x})$ is a function of $\bm{x}$ and $\psi$ corresponds to the normalization factor.
In Eq.~\eqref{eq:exponential_family}, and hereafter the Einstein summation convention will be assumed, so that summation will be automatically taken over indices repeated twice in the term, e.g., $\bm{a}^i\bm{b}_i = \sum_{i} \bm{a}^i\bm{b}_i$.

In the exponential family, the natural parameter $\bm{\theta}$ forms the affine coordinate system, i.e.,
\begin{equation}
    \bm{\theta}(t) = (1-t)\bm{\theta}_1 + t\bm{\theta}_2\ \ (\forall \bm{\theta}_1,\bm{\theta}_2\in\Theta,\ \forall t\in[0,1])
\end{equation}
is a geodesic on $\mathcal{M}$. As a dual coordinate of $\bm{\theta}$, the expectation parameter $\bm{\eta}$ is defined by the Legendre transformation
\begin{align*}
    \bm{\eta} =& \nabla\psi(\bm{\theta}),  \quad
    \bm{\theta} = \nabla\varphi(\bm{\eta}),\\  \mbox{where} \; \; 
    \varphi(\bm{\eta}) =& \max_{\bm{\theta}'}\Big\{\bm{\theta}'\cdot\bm{\eta} - \psi(\bm{\theta}')\Big\}.
\end{align*}

Existing weights for covariate shift adaptation are geometrically characterized, then a generalized weight function is designed based on this geometric formulation.

\section{Geometrical Generalization of Covariate Shift Adaptation}
\subsection{Information Geometrically Generalized IWERM}
\label{subsec:information_geometrically_generalized_iwerm}
In order to derive a generalized covariate shift adaptation method, we prepare the following function.
\begin{defi}[$f$-interpolation~\citep{e23050528}]
\label{def:f_interpolation}
For any $a,b,\in\mathbb{R}$, some $\lambda\in[0,1]$ and some $\alpha\in\mathbb{R}$, we define $f$-interpolation as
\begin{equation}
    m_f^{(\lambda,\alpha)}(a,b) = f^{-1}_\alpha\Big\{(1-\lambda) f_{\alpha}(a) + \lambda f_{\alpha}(b) \Big\},
\end{equation}
where
\begin{equation}
    f_\alpha(a) = \begin{cases}
    a^{\frac{1-\alpha}{2}} & (\alpha\neq 1) \\
    \log a & (\alpha = 1)
    \end{cases}
\end{equation}
is the function that defines the $f$-mean~\citep{hardy1952inequalities}.
\end{defi}
We can easily see that this family includes various known weighted means including the $e$-mixture and $m$-mixture for $\alpha=\pm 1$ in the literature of information geometry~\citep{Amari2016-pi}:
\begin{align*}
m_f^{(\lambda,1)}(a,b) =& \exp\{(1-\lambda)\log a + \lambda \log b\},\\
     m_f^{(\lambda, -1)}(a,b) =& (1-\lambda)a + \lambda b,\\ 
  m_f^{(\lambda, 0)}(a,b) =& \Big((1-\lambda)\sqrt{a} + \lambda\sqrt{b}\Big)^2, \\
 m_f^{(\lambda, 3)}(a,b) =& \frac{1}{(1-\lambda)\frac{1}{a} + \lambda\frac{1}{b}}. 
\end{align*}
Also, for any $\bm{u},\bm{v}\in\mathbb{R}^d\ (d>0)$, we write
\begin{align*}
    \bm{m} = m_f^{(\lambda, \alpha)}(\bm{u}, \bm{v}), 
    \mbox{where}
    \quad
    \bm{m}_i = m_f^{(\lambda, \alpha)}(\bm{u}_i, \bm{v}_i).
\end{align*}

Using this function, we generalize the existing methods of covariate shift adaptation.

\begin{lemm}[$f$-representation of AIWERM]
\label{lem:aiwerm}
The marginal positive measures generated by the weighting of AIWERM can be expressed by using the $f$-interpolation function as
\begin{equation}
    p_A^{(\lambda)}(\bm{x}) = m_f^{(\lambda, 1)}(p_{tr}(\bm{x}), p_{te}(\bm{x})).
\end{equation}
\end{lemm}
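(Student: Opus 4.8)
The plan is to unwind the definition of the AIWERM objective in Eq.~\eqref{eq:aiwerm} and identify the positive measure that the re-weighted risk is effectively integrated against. Writing the AIWERM objective as $\frac{1}{n_{tr}}\sum_i w_\lambda(\bm{x}^{tr}_i)\ell(h(\bm{x}^{tr}_i),y^{tr}_i)$ with weighting function $w_\lambda(\bm{x}) = \big(p_{te}(\bm{x})/p_{tr}(\bm{x})\big)^{\lambda}$, its population counterpart is $\mathbb{E}_{p_{tr}}\big[w_\lambda(\bm{x})\ell(h(\bm{x}),y)\big]$, so the (generally unnormalized) marginal measure on $\mathcal{X}$ generated by this weighting is $p_A^{(\lambda)}(\bm{x}) \coloneqq w_\lambda(\bm{x})\,p_{tr}(\bm{x})$. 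First I would record the elementary pointwise identity $p_A^{(\lambda)}(\bm{x}) = p_{tr}(\bm{x})\big(p_{te}(\bm{x})/p_{tr}(\bm{x})\big)^{\lambda} = p_{tr}(\bm{x})^{1-\lambda}\,p_{te}(\bm{x})^{\lambda}$, which is valid on $\supp(p_{tr}(\bm{x}))\supseteq\supp(p_{te}(\bm{x}))$ by the covariate shift assumption of Definition~\ref{def:covariate_shift_adaptation}, so that no problematic division occurs.

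Next I would specialize the $f$-interpolation of Definition~\ref{def:f_interpolation} to $\alpha=1$. There $f_1(a)=\log a$ and $f_1^{-1}(t)=e^{t}$, hence $m_f^{(\lambda,1)}(a,b) = \exp\{(1-\lambda)\log a + \lambda\log b\} = a^{1-\lambda}b^{\lambda}$, which is exactly the $\alpha=1$ line already displayed just after Definition~\ref{def:f_interpolation}. Applying this with $a=p_{tr}(\bm{x})$ and $b=p_{te}(\bm{x})$, and invoking the coordinatewise convention $\bm{m}_i = m_f^{(\lambda,\alpha)}(\bm{u}_i,\bm{v}_i)$ to read it as an identity of functions evaluated at each $\bm{x}$, gives $m_f^{(\lambda,1)}(p_{tr}(\bm{x}),p_{te}(\bm{x})) = p_{tr}(\bm{x})^{1-\lambda}p_{te}(\bm{x})^{\lambda}$. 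Chaining this with the identity from the first step yields the claimed equality $p_A^{(\lambda)}(\bm{x}) = m_f^{(\lambda,1)}(p_{tr}(\bm{x}),p_{te}(\bm{x}))$.

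There is essentially no analytic obstacle; the content is definitional, and the only step requiring a bit of care is bookkeeping about normalization and support. I would make explicit that $p_A^{(\lambda)}$ is a positive \emph{measure} rather than a probability density, since in general $\int p_{tr}(\bm{x})^{1-\lambda}p_{te}(\bm{x})^{\lambda}\,\dd\bm{x} \neq 1$ — this is precisely why the statement refers to "marginal positive measures" — and note that well-definedness on the relevant domain is guaranteed by $\supp(p_{tr}(\bm{x}))\supseteq\supp(p_{te}(\bm{x}))$: outside $\supp(p_{te})$ the factor $p_{te}(\bm{x})^{\lambda}$ vanishes for $\lambda>0$ so both sides agree trivially, while for $\lambda=0$ both sides reduce to $p_{tr}(\bm{x})$. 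With these remarks the proof is complete.
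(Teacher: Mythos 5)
Your proposal is correct and follows essentially the same route as the paper's proof: identify the positive measure $p_A^{(\lambda)}(\bm{x}) = (p_{te}(\bm{x})/p_{tr}(\bm{x}))^{\lambda}p_{tr}(\bm{x})$ induced by the AIWERM weighting of the population risk, rewrite it as $p_{tr}(\bm{x})^{1-\lambda}p_{te}(\bm{x})^{\lambda}$ via the logarithm, and recognize this as $m_f^{(\lambda,1)}$. Your added remarks on the support condition and on $p_A^{(\lambda)}$ being an unnormalized positive measure are sensible bookkeeping that the paper leaves implicit, but they do not change the argument.
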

\begin{proof}
From~Eq~.\eqref{eq:aiwerm}, we consider its expectation as
\begin{align*}
    \hat{h} =& \min_{h\in\mathcal{H}}\int_{\mathcal{X}\times\mathcal{Y}}\Big(\frac{p_{te}(\bm{x})}{p_{tr}(\bm{x})}\Big)^\lambda \ell(h(\bm{x}), y) p_{tr}(\bm{x},y) d\bm{x}dy \\
    =& \min_{h\in\mathcal{H}}\int_{\mathcal{X}\times\mathcal{Y}} \ell(h(\bm{x}), y) p^{(\lambda)}_A(\bm{x})p_{tr}(y|\bm{x}) d\bm{x}dy. 
\end{align*}
Here, 
\begin{align*}
    p^{(\lambda)}_A(\bm{x}) &= \Big(\frac{p_{te}(\bm{x})}{p_{tr}(\bm{x})}\Big)^\lambda p_{tr}(\bm{x})  \\
    \log p^{(\lambda)}_A(\bm{x}) &= \alpha(\log p_{te}(\bm{x}) - \log p_{tr}(\bm{x})) + \log p_{tr}(\bm{x})\\
    =& (1-\lambda)\log p_{tr}(\bm{x}) + \lambda \log p_{te}(\bm{x})  \\
    p^{(\lambda)}_A(\bm{x}) &= \exp\{(1-\lambda)\log p_{tr}(\bm{x}) + \lambda \log p_{te}(\bm{x})\} \label{eq:exponential_interpolation} \\
    =&  m_f^{(\lambda, 1)}(p_{tr}(\bm{x}), p_{te}(\bm{x})).
\end{align*}
\end{proof}

\begin{lemm}[$f$-representation of RIWERM]
\label{lem:riwerm}
The marginal positive measures generated by the weighting of RIWERM can be expressed by using the $f$-interpolation function as
\begin{equation}
    p^{(\lambda)}_R(\bm{x}) = m^{(\lambda, 3)}_f(p_{tr}(\bm{x}), p_{te}(\bm{x})).
\end{equation}
\end{lemm}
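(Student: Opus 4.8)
The plan is to follow the same route as the proof of Lemma~\ref{lem:aiwerm}. First I would pass to the population version of the RIWERM objective~\eqref{eq:riwerm} and, using $p_{tr}(\bm{x},y)=p_{tr}(\bm{x})p_{tr}(y|\bm{x})$, collect all the factors depending only on $\bm{x}$:
\begin{align*}
\int_{\mathcal{X}\times\mathcal{Y}}\frac{p_{te}(\bm{x})}{\lambda p_{tr}(\bm{x})+(1-\lambda)p_{te}(\bm{x})}\,\ell(h(\bm{x}),y)\,p_{tr}(\bm{x},y)\,d\bm{x}\,dy
&=\int_{\mathcal{X}\times\mathcal{Y}}\ell(h(\bm{x}),y)\,p_R^{(\lambda)}(\bm{x})\,p_{tr}(y|\bm{x})\,d\bm{x}\,dy,
\end{align*}
where the induced (in general unnormalized) marginal positive measure is
\begin{equation*}
p_R^{(\lambda)}(\bm{x})=\frac{p_{te}(\bm{x})}{\lambda p_{tr}(\bm{x})+(1-\lambda)p_{te}(\bm{x})}\,p_{tr}(\bm{x})=\frac{p_{tr}(\bm{x})\,p_{te}(\bm{x})}{\lambda p_{tr}(\bm{x})+(1-\lambda)p_{te}(\bm{x})}.
\end{equation*}

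Next I would evaluate the right-hand side of the claimed identity directly from Definition~\ref{def:f_interpolation}. Setting $\alpha=3$ gives $f_3(a)=a^{(1-3)/2}=a^{-1}$, hence $f_3^{-1}(u)=u^{-1}$, so
\begin{equation*}
m_f^{(\lambda,3)}(a,b)=\left((1-\lambda)\frac{1}{a}+\lambda\frac{1}{b}\right)^{-1}=\frac{ab}{(1-\lambda)b+\lambda a},
\end{equation*}
which is exactly the weighted harmonic mean already listed among the special cases after Definition~\ref{def:f_interpolation}. Substituting $a=p_{tr}(\bm{x})$ and $b=p_{te}(\bm{x})$ and applying the mean coordinatewise in $\bm{x}$ yields $p_{tr}(\bm{x})p_{te}(\bm{x})/\big((1-\lambda)p_{te}(\bm{x})+\lambda p_{tr}(\bm{x})\big)$, which coincides term-for-term with $p_R^{(\lambda)}(\bm{x})$ obtained above; this proves $p_R^{(\lambda)}(\bm{x})=m_f^{(\lambda,3)}(p_{tr}(\bm{x}),p_{te}(\bm{x}))$.

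There is no genuinely hard step here: the argument is a short algebraic simplification running in parallel with the AIWERM case, and the only points demanding care are the bookkeeping of the map $f_\alpha$ at $\alpha=3$ and the convention that $m_f^{(\lambda,\alpha)}$ acts coordinatewise. As in Lemma~\ref{lem:aiwerm}, it is worth stressing that $p_R^{(\lambda)}$ is in general only a positive measure and need not integrate to one; this is harmless for the reweighted risk minimization because $p_{tr}(y|\bm{x})=p_{te}(y|\bm{x})$ under the covariate shift assumption of Definition~\ref{def:covariate_shift_adaptation}, so the conditional factor is left untouched.
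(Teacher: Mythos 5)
Your proposal is correct and follows essentially the same route as the paper: multiply the RIWERM weight by $p_{tr}(\bm{x})$ to obtain the induced positive measure $p_{tr}(\bm{x})p_{te}(\bm{x})/\bigl(\lambda p_{tr}(\bm{x})+(1-\lambda)p_{te}(\bm{x})\bigr)$ and identify it with the weighted harmonic mean $m_f^{(\lambda,3)}$ via $f_3(a)=a^{-1}$. The extra remarks about the population objective and the unnormalized nature of $p_R^{(\lambda)}$ are consistent with the paper but not needed for the identity itself.
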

\begin{proof}
From Eq.~\eqref{eq:riwerm},
\begin{align*}
    p^{(\lambda)}_R(\bm{x}) &= \frac{p_{te}(\bm{x})p_{tr}(\bm{x})}{\lambda p_{tr}(\bm{x}) + (1-\lambda) p_{te}(\bm{x})}\\
    =& \frac{1}{\lambda\frac{1}{p_{te}(\bm{x})} + (1-\lambda)\frac{1}{p_{tr}(\bm{x})}} 
    = m^{(\lambda, 3)}_f(p_{tr}(\bm{x}), p_{te}(\bm{x})).
\end{align*}
\end{proof}

\if0
Also, from the above lemmas, the following proposition immediately follows.
\begin{prop}
Weighting by AIWERM and RIWERM produces a positive measure with $\int p(\bm{x}) d\bm{x} \leq 1$.
\end{prop}
\begin{proof}
When $\lambda$ is fixed, the $f$-interpolation has the following inverse monotonicity with respect to $\alpha$:
\begin{equation}
    m_f^{(\lambda,\alpha)}\geq m_f^{(\lambda,\alpha')},\ (\alpha \leq \alpha'),
\end{equation}
where $\alpha, \alpha' \in \mathbb{R}$.
The relationship between $p_a^{(\lambda)}(\bm{x})$ and $p_r^{(\lambda)}(\bm{x})$ is given as
\begin{equation}
    p_A^{(\lambda)}(\bm{x}) \geq p_R^{(\lambda)}(\bm{x}).
\end{equation}
Since the total probability of the $f$-interpolation of the two density functions is $1$ when $\alpha=-1$, we have
\begin{equation}
 \sum_{\bm{x}\in\mathcal{X}}p_R^{(\lambda)}
 \leq
 \sum_{\bm{x}\in\mathcal{X}} p_A^{(\lambda)}(\bm{x}) \leq 1. \label{eq:inequality_positive_measure}
\end{equation}
From Eq.~\eqref{eq:inequality_positive_measure}, we can see that the total probabilities of $p_A^{(\lambda)}$ and $p_R^{(\lambda)}$ are not guaranteed to be $1$.
\end{proof}
\fi

From the above discussion, the following generalized method of covariate shift adaptation is derived using the $f$-representation.
\begin{theo}[Geometrically generalized IWERM]
For $\lambda\in[0,1]$ and $\alpha\in\mathbb{R}$, AIWERM and RIWERM is generalized as
\begin{equation}
    \hat{h} = \min_{h\in\mathcal{H}}\int_{\mathcal{X}\times\mathcal{Y}}w^{(\lambda,\alpha)}(\bm{x})\ell(h(\bm{x}), y)p_{tr}(\bm{x},y)d\bm{x}dy, \label{eq:giwerm}
\end{equation}
where
\begin{equation}
    w^{(\lambda,\alpha)}(\bm{x}) = 
    \frac{
    m^{(\lambda,\alpha)}_{f}(p_{tr}(\bm{x}),p_{te}(\bm{x}))
    }{p_{tr}(\bm{x})}.
    \label{eq:Gweight}
\end{equation}
\end{theo}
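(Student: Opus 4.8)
The plan is to exhibit the generalized objective \eqref{eq:giwerm} as ordinary ERM against an $f$-interpolated marginal, and then to recover AIWERM and RIWERM as the special cases $\alpha=1$ and $\alpha=3$. First I would substitute the weight \eqref{eq:Gweight} into \eqref{eq:giwerm} and use the decomposition $p_{tr}(\bm{x},y)=p_{tr}(\bm{x})\,p_{tr}(y\mid\bm{x})$ available under the covariate shift assumption of Definition~\ref{def:covariate_shift_adaptation}. The factor $p_{tr}(\bm{x})$ in the denominator of $w^{(\lambda,\alpha)}$ cancels the marginal carried by $p_{tr}(\bm{x},y)$, so that
\[
\hat{h}=\min_{h\in\mathcal{H}}\int_{\mathcal{X}\times\mathcal{Y}}\ell(h(\bm{x}),y)\,m^{(\lambda,\alpha)}_{f}\big(p_{tr}(\bm{x}),p_{te}(\bm{x})\big)\,p_{tr}(y\mid\bm{x})\,d\bm{x}\,dy,
\]
which is ERM with respect to the (in general sub-normalized) positive measure obtained by pairing the $f$-interpolated marginal $m^{(\lambda,\alpha)}_{f}(p_{tr},p_{te})$ with the common conditional $p(y\mid\bm{x})$. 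This is the form in which the generalization is to be ``designed'': a one-parameter family in $\alpha$, on top of the mixing parameter $\lambda$, of interpolations between $p_{tr}$ and $p_{te}$ at the level of the model manifold $\mathcal{M}$.

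Second, I would specialize $\alpha$ and invoke the two preceding lemmas. Taking $\alpha=1$, Lemma~\ref{lem:aiwerm} gives $m^{(\lambda,1)}_{f}(p_{tr},p_{te})=p^{(\lambda)}_{A}=(p_{te}/p_{tr})^{\lambda}p_{tr}$, hence $w^{(\lambda,1)}(\bm{x})=\big(p_{te}(\bm{x})/p_{tr}(\bm{x})\big)^{\lambda}$, and \eqref{eq:giwerm} is exactly the population version of the AIWERM objective \eqref{eq:aiwerm}. Taking $\alpha=3$, Lemma~\ref{lem:riwerm} gives $m^{(\lambda,3)}_{f}(p_{tr},p_{te})=p^{(\lambda)}_{R}=p_{tr}p_{te}/(\lambda p_{tr}+(1-\lambda)p_{te})$, hence $w^{(\lambda,3)}(\bm{x})=p_{te}(\bm{x})/(\lambda p_{tr}(\bm{x})+(1-\lambda)p_{te}(\bm{x}))$, which is the RIWERM weight of \eqref{eq:riwerm}; the boundary case $\lambda=1$ (for any $\alpha$) further collapses to IWERM \eqref{eq:iwerm}. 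Replacing the population integrals by their empirical counterparts over the training sample then recovers the finite-sample estimators verbatim.

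Finally I would check well-definedness. Condition~2) of Definition~\ref{def:covariate_shift_adaptation} gives $\supp(p_{te})\subseteq\supp(p_{tr})$, so on the support of $p_{tr}$ the numerator of \eqref{eq:Gweight} is a finite $f$-mean of two positive numbers and $w^{(\lambda,\alpha)}$ is finite $p_{tr}$-almost everywhere; combined with the standing assumption that $\ell$ is bounded, the integral in \eqref{eq:giwerm} is well-defined for every $\lambda\in[0,1]$ and $\alpha\in\mathbb{R}$. I do not expect a genuine obstacle in this argument — essentially all of the work is already carried out in Lemmas~\ref{lem:aiwerm}--\ref{lem:riwerm} and in the cancellation of $p_{tr}(\bm{x})$ — the only point deserving a line of care is the regime $\alpha>1$, where $f_{\alpha}$ involves negative powers and one must confirm that the $f$-mean and the reweighted objective stay finite, which again follows from the support condition.
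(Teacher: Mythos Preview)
Your proposal is correct and follows essentially the same approach as the paper's proof: both arguments amount to the cancellation $w^{(\lambda,\alpha)}(\bm{x})\,p_{tr}(\bm{x},y)=m^{(\lambda,\alpha)}_{f}(p_{tr},p_{te})\,p_{tr}(y\mid\bm{x})$ together with the identifications from Lemmas~\ref{lem:aiwerm} and~\ref{lem:riwerm} at $\alpha=1$ and $\alpha=3$. The only cosmetic difference is direction---the paper starts from the AIWERM and RIWERM integrals and recognizes the common $m^{(\lambda,\alpha)}_f$ pattern, whereas you start from \eqref{eq:giwerm} and specialize---and your added well-definedness check is not in the paper but is a harmless (and welcome) extra.
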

See the Appendix~\ref{app:prf} for the proof.
From Definition~\ref{def:f_interpolation}, we can confirm that
\begin{align*}
    m_f^{(0,\alpha)}(p_{tr}(\bm{x}), p_{te}(\bm{x})) &=p_{tr}(\bm{x}), \; \;
    \mbox{and}\\
    m_f^{(1,\alpha)}(p_{tr}(\bm{x}), p_{te}(\bm{x}))&=p_{te}(\bm{x}),
\end{align*}
for all $\alpha\in\mathbb{R}$, and this means that we can obtain the set of all curves that connect $p_{tr}(\bm{x})$ and $p_{te}(\bm{x})$. 
\if0
:
\begin{equation}
    \Pi_{(p_{tr}, p_{te})} = \Big\{\gamma_{\alpha}(\lambda) = m_f^{(\lambda, \alpha)}(p_{tr}(\bm{x}), p_{te}(\bm{x}))\ \Big|\ \lambda \in [0,1], \alpha\in\mathbb{R} \Big\}.
\end{equation}
\fi 

\if0
When we observe $\{(\bm{x}^{tr}_i, y^{tr}_i)\}^{n_{tr}}_{i=1}$ and $\{(\bm{x}^{te}_i)\}^{n_{te}}_{i=1}$, we can specify the observed points on the coordinate system expressed by the expectation parameter $\bm{\eta}$ as
\begin{align*}
    \bm{\eta}_{tr} = \bar{\bm{x}}^{tr} = \frac{1}{n_{tr}}\sum^{n_{tr}}_{i=1}\bm{x}^{tr}_i, \quad
    \bm{\eta}_{te} = \bar{\bm{x}}^{te} = \frac{1}{n_{te}}\sum^{n_{te}}_{i=1}\bm{x}^{te}_i,
\end{align*}
and the weighting of the empirical distributions is regarded as the weighting of the observed points: $m_f^{(\lambda,\alpha)}(\bm{\eta}_{tr}, \bm{\eta}_{te})$.
\fi

We note that~\citet{DBLP:journals/sncs/ZhangYLS21} proposed a method based on basis expansion to estimate a flexible importance weight. It is similar to our proposal in the sense that improves the degree of freedom for designing the importance weight. However, our method considers the parametric form of weight, which enables us to achieve information geometric insight.

In many studies of covariate shift problems using the density ratio weighting including~\citet{Yamada2011-ws}, the direct estimation of the density ratio is often employed~\citep{sugiyama2012density}. Our proposed weight function in~\eqref{eq:Gweight} is also represented as density ratio:
\begin{align*}
    w^{(\lambda,\alpha)}(\bm{x})
    =&
    \frac{
    \left[ 
    (1-\lambda) p_{tr}(\bm{x})^{\frac{1-\alpha}{2}}
    +
    \lambda 
    p_{te}(\bm{x})^{\frac{1-\alpha}{2}}
    \right]^{\frac{2}{1-\alpha}}
    }{
    p_{tr}(\bm{x})}
    \\
    =&
    \left[ 
    1-\lambda +
    \lambda 
    \left( 
    \frac{p_{te}(\bm{x})
    }{
    p_{tr}(\bm{x})
    }
    \right)^{\frac{1-\alpha}{2}}
    \right]^{\frac{2}{1-\alpha}}, \quad (\alpha \neq 1).
\end{align*}
It is then also possible to apply the direct estimation of the density ratio using, for example kernel expansion. In our implementation, we simply used the given $p_{tr}(\bm{x})$ and $p_{te}(\bm{x})$ separately because they are explicitly known by the construction of the training and the test datasets as explained in Section~\ref{subsec:inducing_covariate_shift}. In the practical application of the proposed method in which the generative processes of the covariates of training and test data are unknown, direct density estimation would be a promising approach.

\subsection{Geometric Bias}
\label{subsec:geometric_bias}
AIWERM and RIWERM connects two distributions $p_{tr}$ and $p_{te}$ in different ways. Statistical bias and variance of IWERM, AIWERM, and RIWERM are discussed in the respective papers. In this subsection, we study the geometric bias of these methods to have a deeper understanding of these methods from the geometric viewpoint. 

The proposed generalization of IWERM is independent from a specific parametrization of density functions. In this subsection, for theoretical treatment, the exponential  model manifold which contains $p_{tr}(\bm{x};\bm{\theta})$ and $p_{te}(\bm{x};\bm{\theta})$ are considered, hence 
geodesics can be described by a linear combination of parameters as explained in Appendix~\ref{app:mfd}. With this assumption, specifying $\lambda$ and $\alpha$ is equivalent to selecting a point on the geodesic connecting $p_{tr}$ and $p_{te}$.

\begin{defi}[$\alpha$-divergence~\citep{Amari1985-mi}]
Let $\alpha$ be a real parameter.
The $\alpha$-divergence between two probability vectors $\bm{p}$ and $\bm{q}$ is defined as
\begin{equation}
    D_\alpha[\bm{p}:\bm{q}] = \frac{4}{1-\alpha^2}\Big(1 - \sum_i p_i^{\frac{1-\alpha}{2}}q_i^{\frac{1+\alpha}{2}}\Big).
\end{equation}
\end{defi}
\begin{defi}{($\alpha$-representation~\citep{Amari2009-fz})}
For some positive measure $m_i^{\frac{1-\alpha}{2}}$, the coordinate system $\bm{\theta}=(\theta^i)$ derived from the $\alpha$-divergence is 
    $\theta^i = m_i^{\frac{1-\alpha}{2}} = f_\alpha(m_i)$
and denote by $\theta^i$ the $\alpha$-representation of a positive measure $m_i^{\frac{1-\alpha}{2}}$.
\end{defi}
\begin{defi}[$\alpha$-geodesic~\citep{Amari2016-pi}]
\label{def:alpha_geodesic}
The $\alpha$-geodesic connecting two probability vectors $p(\bm{x})$ and $q(\bm{x})$ is defined as
\begin{align}
    \notag
    r_i(\lambda) =& c(t)f^{-1}_{\alpha}\Big\{(1-\lambda)f_{\alpha}(p(x_i)) + \lambda f_\alpha(q(x_i))\Big\},\\
    c(\lambda) =& \left(\sum^p_{i=1}r_i(\lambda)\right)^{-1}.
\end{align}
\end{defi}
Let $\psi_\alpha(\bm{\theta}) = \frac{1-\alpha}{2}\sum_{i=1} m_i$, the dual coordinate system $\bm{\eta}$ is given by $\bm{\eta} = \nabla\psi_\alpha(\bm{\theta})$ as
\begin{equation}
    \eta_i = (\theta^i)^{\frac{1+\alpha}{1-\alpha}} = f_{-\alpha}(m_i),
\end{equation}
which is the $-\alpha$-representation of $m_i$.

From Definitions~\ref{def:f_interpolation} and \ref{def:alpha_geodesic}, we see that $f$-interpoloation is the unnormalized version of the $\alpha$-geodesic.
We write $\tilde{m}^{(\lambda,\alpha)}_f$ for a suitably normalized $f$-interpolation.
The important properties of $\alpha$-geodesics are
\begin{itemize}
    \item the $\alpha$-geodesic is a geodesic in the $\alpha$-coordinate system derived from $\alpha$-divergence,
    \item the $-\alpha$-geodesic is linear in the $-\alpha$-representation.
\end{itemize}

\if0
Then, if we find that the $f$-representation of $p_{tr}$ and $p_{te}$ is $m_f^{(\lambda,\alpha)}$, we can examine the behavior of it in the $\bm{\theta}$-coordinate system through its dual $m_f^{(\lambda,-\alpha)}$, i.e.,
\begin{equation*}
    \tilde{m}_f^{(\lambda,-\alpha)}(\bm{\theta}_{tr},\bm{\theta}_{te}) = (1-\lambda)\bm{\theta}_{tr} + \lambda\bm{\theta}_{te}.
\end{equation*}

Consider a set of all curves connecting $p_{tr}(\bm{x})$ and $p_{te}(\bm{x})$:
\begin{align} \notag
    \Pi_{(p_{tr}, p_{te})} = \Big\{  \gamma_{\alpha}:[0,1] \to \mathcal{M}
    & \; | \;
    \gamma_{\alpha}(0) = p_{tr}(\bm{x}), \;
    \gamma_{\alpha}(1) = p_{te}(\bm{x}), \\
    & \;\;\;   \gamma_{\alpha}(\lambda) = \tilde{m}^{(\lambda,\alpha)}_{f}(p_{tr}(\bm{x}),p_{te}(\bm{x})), \; \alpha \in \mathbb{R}
 \Big\}.
\end{align}
\fi

Let $\gamma_c$ be the geodesic connecting two distributions parameterized by $\bm{\theta}_{tr}$ and $\bm{\theta}_{te}$.
Now, we define two types of geometric biases to characterize the dispersion of $\bm{\theta}_{tr}$ from $\bm{\theta}_{te}$ with respect to the direction along the $\alpha$-geodesic and to the direction orthogonal to the $\alpha$-geodesic.

\begin{defi}[Geodesic bias and curvature bias]
If we write the unit vector along the $\alpha$-geodesic direction as $e_1$ and any unit vector in the orthogonal direction to $e_1$ as $e_2$, the bias relative to the test distribution due to weighting can be decomposed as follows:
\begin{itemize}
    \item geodesic bias: $b_g = (1-\lambda)e_1$,
    \item curvature bias: $b_c = (1-\lambda)tr_g(\mathrm{Ric})e_2$,
\end{itemize}
where $tr_g$ is the trace operation on the metric tensor $g$ and $\mathrm{Ric}$ is the Ricci curvature of the curve connecting the two points generated by the weighting:
\begin{equation}
    \mathrm{Ric} = R_{ikj} d\bm{\theta}^i \otimes d\bm{\theta}^j.
\end{equation}
Here, $R_{ikj}$ is the Riemannian curvature tensor.
\end{defi}
For more detail on the geometric concepts, see textbooks on Riemannian manifolds~\citep{Jost2017-ad}. This definition of geometric biases is consistent with the fact that IWERM, which corresponds to $\lambda=1$, leads to an unbiased estimator of the risk in the test dataset.

\begin{prop}
\label{prop:geometric_bias_aiwerm}
For AIWERM, the geometric bias $b_A(\lambda)$ is computed as
\begin{equation}
    b_A(\lambda) = (1-\lambda)e_1.
\end{equation}
\end{prop}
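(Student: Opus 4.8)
The plan is to show that the weighting used by AIWERM places $p_{tr}$ at the point $\lambda = 0$ and $p_{te}$ at $\lambda = 1$ along a \emph{specific} geodesic (the $\alpha = 1$, i.e.\ $e$-geodesic), and then to read off the geometric bias directly from the definition of geodesic bias and curvature bias. First I would recall from Lemma~\ref{lem:aiwerm} that the marginal positive measure induced by AIWERM is $p_A^{(\lambda)}(\bm{x}) = m_f^{(\lambda,1)}(p_{tr}(\bm{x}), p_{te}(\bm{x}))$, which is (up to normalization) exactly the $\alpha$-geodesic of Definition~\ref{def:alpha_geodesic} with $\alpha = 1$ connecting $p_{tr}$ and $p_{te}$. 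Since the exponential family is assumed, the $e$-geodesic is affine in the natural parameter $\bm{\theta}$: $\bm{\theta}_A(\lambda) = (1-\lambda)\bm{\theta}_{tr} + \lambda\bm{\theta}_{te}$.

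Next I would compute the displacement of the weighted distribution from the target $p_{te}$. The point selected by AIWERM at parameter $\lambda$ satisfies $\bm{\theta}_{te} - \bm{\theta}_A(\lambda) = (1-\lambda)(\bm{\theta}_{te} - \bm{\theta}_{tr})$, so the displacement is a scalar multiple $(1-\lambda)$ of the vector from $p_{tr}$ to $p_{te}$ measured along the geodesic. Decomposing this displacement into the unit vector $e_1$ tangent to the $\alpha$-geodesic and the orthogonal unit vector $e_2$ as in the Geodesic bias / curvature bias definition, the entire displacement lies along $e_1$ because the path traversed \emph{is} the geodesic itself — there is no component transverse to it. Hence the geodesic bias component is $(1-\lambda)e_1$ and the curvature bias component vanishes (the Ricci-curvature term is multiplied by a zero transverse displacement), giving $b_A(\lambda) = (1-\lambda)e_1$. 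This is also consistent with the remark following the bias definition: at $\lambda = 1$ (ordinary IWERM) the bias is zero, and at $\lambda = 0$ (unweighted ERM) the bias is the full geodesic separation $e_1$.

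The main obstacle I anticipate is not the algebra but making precise \emph{which} geodesic the bias is measured against: the curvature bias term $b_c = (1-\lambda)\,tr_g(\mathrm{Ric})\,e_2$ is only zero if the curve traced by AIWERM coincides with the reference $\alpha$-geodesic $\gamma_c$, so I must argue carefully that the AIWERM path (an $e$-geodesic) is precisely the geodesic whose tangent defines $e_1$. This follows because the $f$-interpolation with $\alpha = 1$ is exactly the $e$-geodesic, and — crucially — in an exponential family the $e$-connection is flat, so this curve is an honest geodesic with vanishing curvature along its own direction; there is therefore no transverse excursion and the Riemannian curvature tensor contracted against the (zero) orthogonal displacement contributes nothing. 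I would state this identification explicitly, cite the flatness of the exponential family under the $e$-connection from Appendix~\ref{app:mfd}, and then the conclusion $b_A(\lambda) = (1-\lambda)e_1$ is immediate from the definition of geodesic bias.
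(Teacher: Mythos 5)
Your argument is correct, but it reaches the conclusion by a shorter and less general route than the paper. The paper proves Propositions~\ref{prop:geometric_bias_aiwerm} and~\ref{prop:geometric_bias_riwerm} together: it introduces the relative curvature tensor $R^{(\alpha,\beta)}$ and the difference tensor $K(X,Y)=\nabla^{*}_X Y-\nabla_X Y$, expands $\nabla^{(\alpha)}=\frac{1+\alpha}{2}\nabla^{*}+\frac{1-\alpha}{2}\nabla$, and, using dual flatness ($R=R^{*}=0$), derives the general identity $R^{(\alpha)}(X,Y,Z)=\frac{1-\alpha^{2}}{4}\bigl(K(Y,K(X,Z))-K(X,K(Y,Z))\bigr)$; the bias formula $b(\alpha,\lambda)=(1-\lambda)\{e_1+tr_g(\frac{1-\alpha^2}{2}\Lambda_{ikj}d\bm{\theta}^i\otimes d\bm{\theta}^j)e_2\}$ then yields the AIWERM case by substituting $\alpha=1$, which kills the prefactor $1-\alpha^{2}$. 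You instead specialize from the outset: you identify the AIWERM curve as the (unnormalized) $e$-geodesic via Lemma~\ref{lem:aiwerm}, note it is affine in the natural parameter, and invoke the $e$-flatness of the exponential family ($\Gamma^{(1)}_{ij,k}=0$, hence $R^{(1)}=0$) to conclude that $tr_g(\mathrm{Ric})=0$ and the curvature bias vanishes. That is a valid and more elementary argument for this proposition; what it gives up is the quantitative formula for $R^{(\alpha)}$ at general $\alpha$, which the paper needs anyway to compute the nonzero curvature bias of RIWERM at $\alpha=3$. One minor imprecision to fix: you at one point attribute the vanishing of $b_c$ to the curvature being ``contracted against the (zero) orthogonal displacement,'' but the definition $b_c=(1-\lambda)\,tr_g(\mathrm{Ric})\,e_2$ contains no such contraction --- the curvature bias vanishes because $\mathrm{Ric}=0$ for the $\alpha=1$ connection, which is the other (and correct) reason you give; you should drop the displacement-based justification and keep only the flatness argument.
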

\if0
\begin{proof}
From Lemma~\ref{lem:aiwerm}, we can compute the dual $f$-representation of AIWERM as
\begin{align}
    \bm{\theta}_A^{(\lambda)} &= m_f^{(\lambda, -1)}(\bm{\theta}_{tr}, \bm{\theta}_{te}) \\
    &= (1-\lambda)\bm{\theta}_{tr} + \lambda \bm{\theta}_{te}. \label{eq:aiwerm_dual}
\end{align}
From the fact that $(\mathcal{S}, g, \nabla^{(1)}, \nabla^{(-1)})$ is a dually flat manifold, it is clear that Eq.~\eqref{eq:aiwerm_dual} is also a geodesic.
Then, for AIWERM, we can confirm that the curvature bias $b_c$ vanishes as
\begin{equation*}
    \Gamma^{(-1)}_{abc}(\bm{\theta}) = 0.
\end{equation*}
The normalized probability distributions on the geodesic are
\begin{align}
    p(\bm{x}, \lambda) &= p(\bm{x}, \bm{\theta}(\lambda)) \nonumber \\
    &= \exp\Big\{\lambda(\bm{\theta}_{te}-\bm{\theta}_{tr})\cdot\bm{x} + \bm{\theta}_{tr}\bm{x} \Big\}. 
\end{align}
Hense, geodesic itself is a one-dimensional exponential family, where $\lambda$ is the natural parameter.
By taking the logarithm, we can obtain
\begin{align}
    \log p(\bm{x};\lambda) &= (1-\lambda)\log p(\bm{x};\bm{\theta}_{tr}) + \lambda\log p(\bm{x};\bm{\theta}_{te}) \nonumber \\
    p(\bm{x};\lambda) &= \exp\Big\{ (1-\lambda)\log p_{tr}(\bm{x}) + \lambda\log p_{te}(\bm{x})\Big\} \nonumber \\
    &= m_f^{(\lambda,1)}(p_{tr}(\bm{x}), p_{te}(\bm{x})) = p_A^{(\lambda)}(\bm{x}).
\end{align}
Thus, the duality of AIWERM is confirmed.
\end{proof}
\fi

\begin{figure}
    \centering
    \includegraphics[width=0.98\linewidth,bb=0 370 590 720]{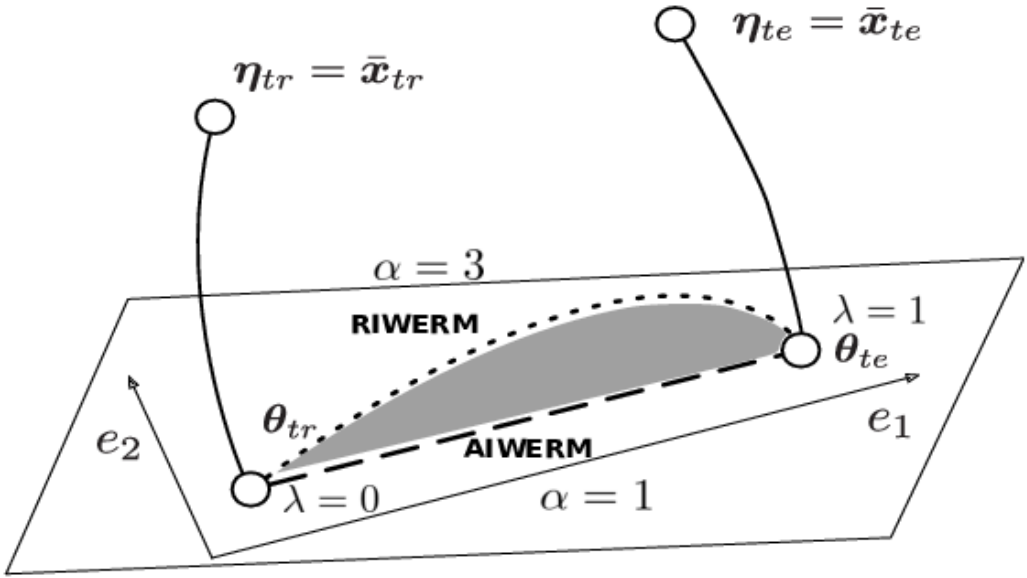}
    \caption{Geometry of covariate shift adaptation methods.
    In the $\bm{\theta}$-coordinate system, the dashed line corresponds to AIWERM and the dotted line corresponds to RIWERM.
    We write unit vector along the $\alpha$-geodesic direction as $e_1$ and any unit vector in the orthogonal direction to $e_1$ as $e_2$.
    Here, $\lambda=0$ and $\lambda=1$ correspond to $\theta_{tr}$ (ERM) and $\theta_{te}$ (IWERM), respectively, and $\alpha=1$ and $\alpha=3$ correspond to the AIWERM and RIWERM curves in the figure.}
    \label{fig:geometry_of_covariate_shift}
\end{figure}
\begin{figure}
    \centering
    \includegraphics[width=0.98\linewidth,bb=0 850 960 1420]{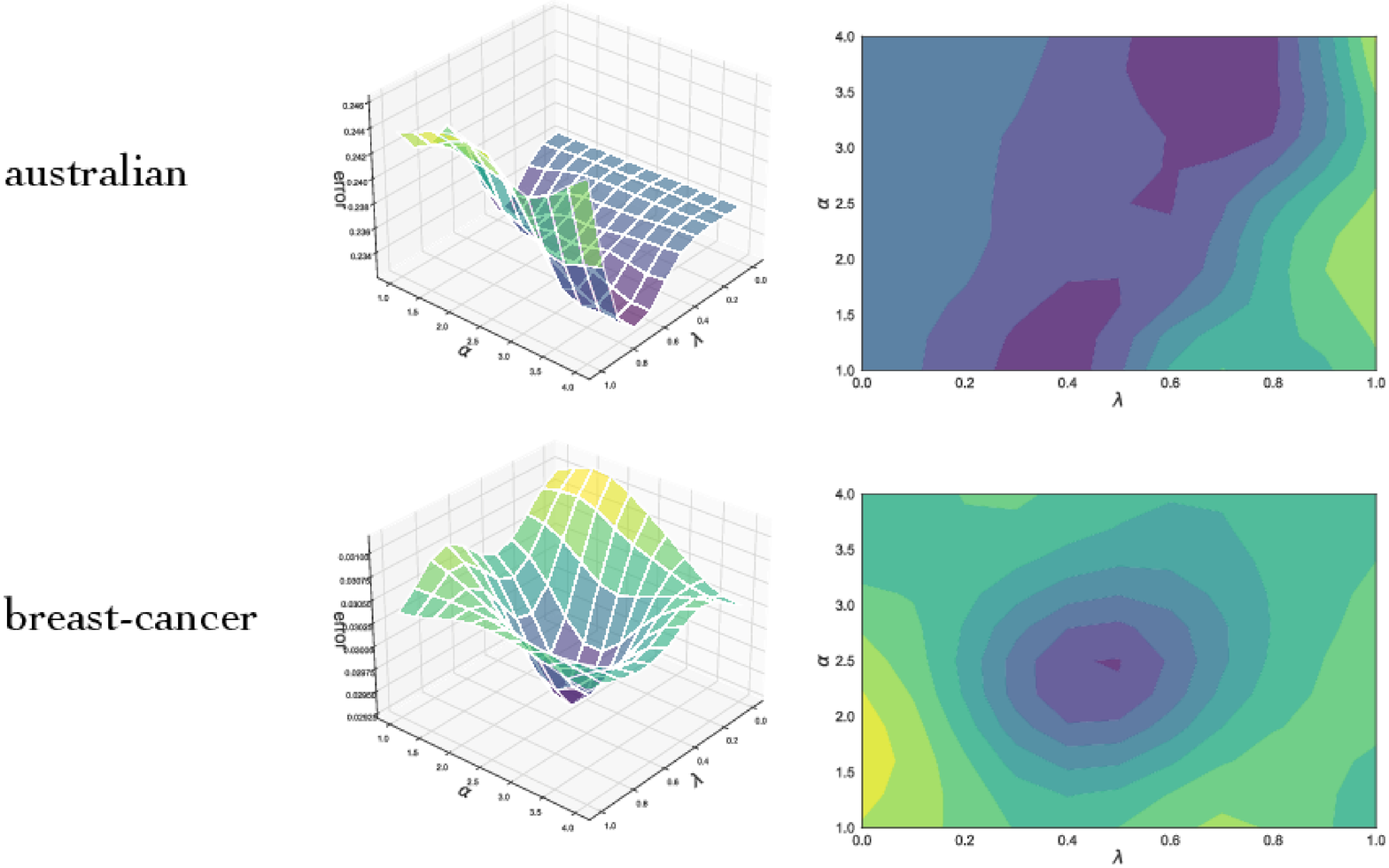}
    \caption{Visualization of grid search for $\alpha$ and $\lambda$ on LIBSVM dataset.}
\end{figure}

\begin{algorithm}[t]
\caption{Bayesian optimization for IGIWERM}         
\label{alg:bopt}                          
\begin{algorithmic}                  
\REQUIRE acquisition function $a(\lambda,\alpha|D)$, target function $L(h;\lambda, \alpha)$, initial points $D_{init}$ compose of a set of parameters $\Xi=\{(\lambda,\alpha)\}$ and corresponding values of the target function
\ENSURE $(\lambda^\ast, \alpha^\ast)$ that minimizes $\min_{h \in \mathcal{H}} L(h;\lambda, \alpha)$
\STATE Initialize $D=D_{init}$
\WHILE{Not converge}
\STATE $\hat{\lambda}, \hat{\alpha} = \argmin_{\lambda, \alpha} a(\lambda,\alpha|D),\; \; \Xi = \Xi \cup \{(\hat{\lambda},\hat{\alpha})\}$
\STATE $\hat{e} = L(h;\hat{\lambda}, \hat{\alpha}),\; \;D = D\cup\{(\hat{\lambda}, \hat{\alpha},\hat{e})\}$
\ENDWHILE
\STATE $(\lambda^\ast, \alpha^\ast) = \argmin_{(\lambda, \alpha) \in \Xi} \left\{\min_{h \in \mathcal{H}} L(h;\lambda, \alpha)\right\}$
\end{algorithmic}
\end{algorithm}

\begin{prop}
\label{prop:geometric_bias_riwerm}
For RIWERM, the geometric bias $b_R(\lambda)$ is computed as
\begin{equation}
    b_R(\lambda) = (1-\lambda)\Big\{e_1 + tr_g\Big(-4\Lambda_{ikj}d\bm{\theta}^i \otimes d\bm{\theta}^j\Big)e_2\Big\}.
\end{equation}
Here, $\Lambda$ is a tensor that depends on the connection.
\end{prop}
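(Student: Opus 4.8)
The plan is to compute the geometric bias of RIWERM by the same template used for AIWERM in Proposition~\ref{prop:geometric_bias_aiwerm}, but now accounting for the fact that the RIWERM curve is \emph{not} a geodesic in the affine $\bm{\theta}$-coordinate system. From Lemma~\ref{lem:riwerm} we know RIWERM corresponds to $m_f^{(\lambda,3)}(p_{tr},p_{te})$, so its dual ($-\alpha$, i.e. $\alpha\mapsto -3$) representation is the curve $r_i(\lambda)$ of Definition~\ref{def:alpha_geodesic} with $\alpha=3$. First I would write out this curve explicitly in the $\bm{\theta}$-coordinates of the exponential model (using that $\theta^i = f_\alpha(m_i)$ and its dual $\eta_i = f_{-\alpha}(m_i)$), expand it around $\lambda=1$ (the point $\bm{\theta}_{te}$, which is the unbiased IWERM endpoint), and read off the first-order displacement. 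The linear term in $(1-\lambda)$ gives the tangent direction, which by definition is $e_1$; this reproduces the $(1-\lambda)e_1$ geodesic-bias part exactly as in AIWERM.

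The new content is the second-order term. Because $\gamma_c$ for RIWERM is an $\alpha$-geodesic with $\alpha=3\neq -1$, it deviates from the $\bm{\theta}$-affine straight line, and this deviation is governed by the Christoffel symbols $\Gamma^{(\alpha)}$ (equivalently by the $\alpha$-connection coefficients relative to the $e$-connection). I would use the standard relation $\Gamma^{(\alpha)}_{ijk} = \Gamma^{(0)}_{ijk} - \tfrac{\alpha}{2}T_{ijk}$ (with $T$ the Amari–Chentsov tensor) to express the geodesic equation $\ddot\theta^k + \Gamma^{(\alpha)\,k}_{ij}\dot\theta^i\dot\theta^j = 0$ and solve it perturbatively near $\lambda=1$. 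The second-order coefficient then collapses into a tensor contracted with $d\bm{\theta}^i\otimes d\bm{\theta}^j$; identifying its trace over the Fisher metric $g$ with the curvature-bias slot of the Geodesic/curvature-bias definition produces the $tr_g(\cdot)\,e_2$ term. The factor $-4$ and the precise form of the tensor $\Lambda_{ikj}$ come from substituting $\alpha=3$ into the connection-difference formula (the $-\tfrac{\alpha}{2}$ becomes $-\tfrac32$, and the remaining factors from the $f_3$, $f_{-3}$ coordinate changes and the $4/(1-\alpha^2)$ normalization of $D_\alpha$ accumulate to the stated constant), so I would simply define $\Lambda_{ikj}$ to be whatever connection-dependent tensor makes this bookkeeping exact and note that it reduces to the Riemannian curvature tensor contribution as in the general curvature-bias definition.

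The main obstacle I anticipate is keeping the normalization consistent: $f$-interpolation is the \emph{unnormalized} $\alpha$-geodesic (the $c(\lambda)$ factor in Definition~\ref{def:alpha_geodesic}), so when I pass to the normalized distribution $\tilde m_f^{(\lambda,3)}$ the $c(\lambda)$ prefactor contributes its own first- and second-order terms that must be disentangled from the genuinely curvature-induced part. I would handle this by working in the $-\alpha$-representation where the curve is linear (the second bullet after Definition~\ref{def:alpha_geodesic}), doing the expansion there, and only then transforming back to $\bm{\theta}$-coordinates, so that the nonlinearity — and hence the entire curvature bias — is produced in a single controlled change of coordinates rather than being entangled with the normalization. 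A secondary subtlety is that $e_2$ is "any" unit vector orthogonal to $e_1$, so I must check that the second-order displacement indeed lies in the orthogonal complement of $e_1$ (to first order in $1-\lambda$ this is automatic since the quadratic term of a geodesic is $g$-orthogonal to its velocity), which justifies writing the curvature bias purely in the $e_2$ slot.
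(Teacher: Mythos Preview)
Your plan diverges from the paper's proof and contains a real gap. The paper never Taylor-expands the RIWERM curve or reads curvature from a perturbative solution of the geodesic equation. Instead it computes the Riemann tensor $R^{(\alpha)}$ of the $\alpha$-connection by a purely algebraic identity on a dually flat space: it introduces the relative curvature $R^{(\alpha,\beta)}(X,Y,Z)=[\nabla^{(\alpha)}_X,\nabla^{(\beta)}_Y]Z-\nabla^{(\alpha)}_{[X,Y]}Z$ and the difference tensor $K(X,Y)=\nabla^{*}_XY-\nabla_XY$, expands everything in $\nabla,\nabla^{*}$, and after invoking $R=R^{*}=0$ (dual flatness of the exponential family) arrives at
\[
R^{(\alpha)}=\frac{1-\alpha^{2}}{4}\,\Lambda,\qquad \Lambda(X,Y,Z)=K\big(Y,K(X,Z)\big)-K\big(X,K(Y,Z)\big).
\]
Plugging this into the definition $b_c=(1-\lambda)\,tr_g(\mathrm{Ric})\,e_2$ gives $b(\alpha,\lambda)=(1-\lambda)\{e_1+tr_g(\tfrac{1-\alpha^{2}}{2}\Lambda_{ikj}\,d\bm{\theta}^{i}\otimes d\bm{\theta}^{j})e_2\}$, and the constant $-4$ is simply $\tfrac{1-\alpha^{2}}{2}\big|_{\alpha=3}$. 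The same formula with $\alpha=1$ kills the curvature bias and yields the AIWERM proposition for free. Note that $\Lambda$ is not left as ``whatever tensor makes the bookkeeping exact'': it is derived, and it is \emph{quadratic} in the difference tensor $K$.

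Your route confuses two different objects. Solving $\ddot{\theta}^{k}+\Gamma^{(\alpha)\,k}_{ij}\dot{\theta}^{i}\dot{\theta}^{j}=0$ perturbatively near $\lambda=1$ produces, at second order, the Christoffel symbols $\Gamma^{(\alpha)}$ contracted with the velocity---not the Riemann or Ricci tensor that the definition of $b_c$ calls for. Curvature requires derivatives and products of the $\Gamma$'s, which your expansion never generates. In particular, the factor governing $R^{(\alpha)}$ is $1-\alpha^{2}$, quadratic in $\alpha$; it cannot be assembled from the linear $-\alpha/2$ in the Christoffel formula together with the $4/(1-\alpha^{2})$ from the $\alpha$-divergence normalization (those multiply to $+3/4$ at $\alpha=3$, not $-4$). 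A further slip: your claim that the quadratic term of a geodesic is $g$-orthogonal to its velocity is a Levi--Civita fact; the $\alpha$-connection for $\alpha\neq 0$ is not metric, so this orthogonality fails and your justification for placing the entire residual in the $e_2$ slot does not stand. To repair the argument you must compute $R^{(\alpha)}$ itself, and the difference-tensor identity above is the direct way to do it.
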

These propositions are proved by straightforward calculation as detailed in Appendix~\ref{app:prf}

\begin{figure}[t]
    \centering
    \includegraphics[width=1.00\linewidth, bb=0 645 914 918]{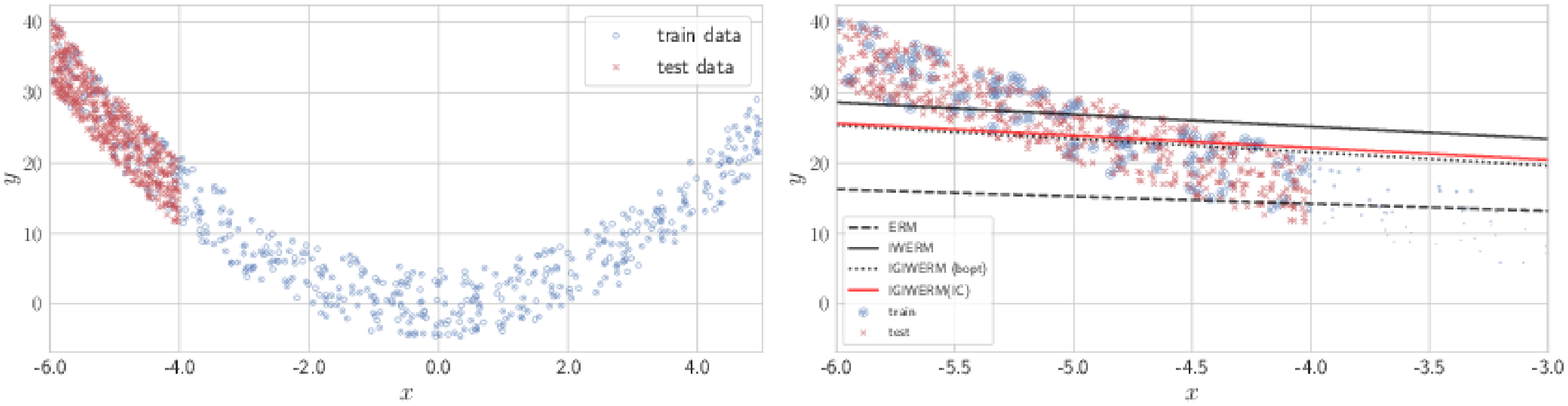}
    \caption{Left: generated data from $y=x^2+\varepsilon$. We see that $p_{tr}(x)$ and $p_{te}(x)$ are different. Right: results of fitting by ERM, IWERM, and IGIWERM.}
    \label{fig:dummy_experiment}
\end{figure}

\begin{figure}[t]
    \centering
    \includegraphics[width=0.99\linewidth,bb=0 0 1150 570]{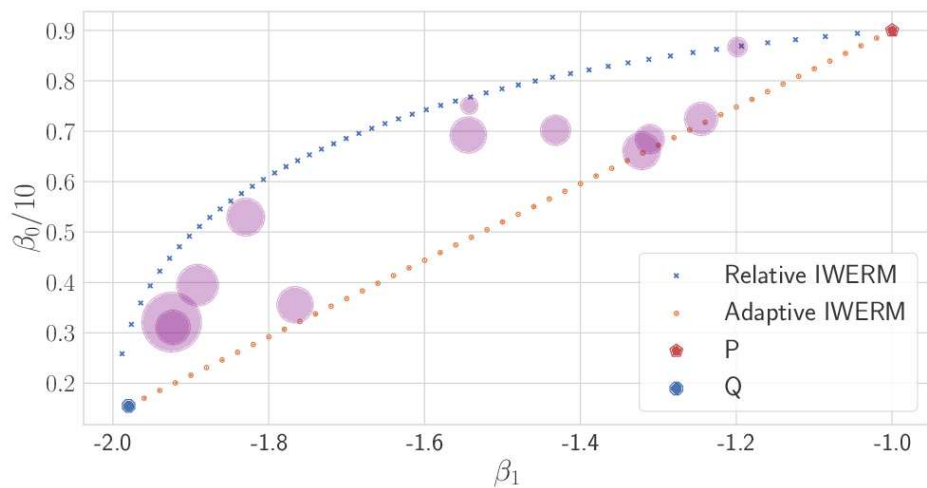}
    \caption{Bayesian optimization for IGIWERM. The coordinates of the purple circles are the parameters explored by Bayesian optimization, and the size of the purple circles indicates the goodness of the parameters (inverse of the MSE).\label{fig:bopt}}
\end{figure}
\begin{table}[t]
    \centering
\caption{Mean squared errors of covariate shift adaptation methods in regression problems over $10$ trials. Here, IGIWERM (bopt) is the Bayesian optimization based, and IGIWERM (IC) is the information criterion based strategy.\label{tab:dummy_mse}}    
    \begin{tabular}{l|l}
    Weighting strategy & MSE    \\ \hline
    ERM                & $160.19 (\pm 4.25)$       \\
    IWERM              & $33.76 (\pm 3.82)$        \\
    AIWERM             & $31.14 (\pm 2.97)$        \\
    RIWERM             & $30.03 (\pm 2.74)$     \\
    IGIWERM (bopt)     & ${\bf 28.89 (\pm 2.42)}$  \\
    IGIWERM (IC)       & ${\bf 28.38 (\pm 2.12)}$   \\
    \end{tabular}
\end{table}

Figure~\ref{fig:geometry_of_covariate_shift} shows the curves on the manifolds created by AIWERM and RIWERM.
Both of them satisfy
\begin{itemize}
    \item for $\lambda=0$, it is equivalent to unweighted ERM,
    \item for $\lambda=1$, it is equivalent to IWERM.
\end{itemize}
Note that the curvature bias $b_c$ vanishes for all $\lambda \in [0,1]$ in AIWERM, while RIWERM does not guarantee the vanishing of the curvature bias for $\lambda\in(0,1)$.

\if0
Furthermore, the following optimality can be derived for AIWERM in the sense of KL-divergence.
\begin{lemma}{(\citep{Amari2016-pi})}
\label{lem:shortest_geodesic}
The symmetrized KL-divergence is given by the integration of the Fisher information along the $(-1)$-geodesic:
\begin{equation}
    \frac{1}{2}\Big\{D_{KL}[\bm{\theta}_1:\bm{\theta}_2] + D_{KL}[\bm{\theta}_2:\bm{\theta}_1]\Big\} = \int^1_0 g_{m}(t)dt,
\end{equation}
where
\begin{align}
    g_{m}(t) = g_{ij}\dot{\theta}^i_m(t)\dot{\theta}^j_m(t), \quad
    \bm{\theta}_m(t) = (1-t)\bm{\theta}_1 + t\bm{\theta}_2.
\end{align}
\end{lemma}
\begin{theo}
Let $\Pi_{(\bm{\theta}_{tr}, \bm{\theta}_{te})}$ be a set of all curves that connect $\bm{\theta}_{tr}$ and $\bm{\theta}_{te}$.
The geodesic generated by AIWERM is the shortest path in $\Pi_{(\bm{\theta}_{tr}, \bm{\theta}_{te})}$ in the sense of symmetrized KL-divergence.
\end{theo}
\begin{proof}
Follows immediately from Lemma~\ref{lem:shortest_geodesic}.
\end{proof}
\fi 

\if0

For example, the loss of IGIWERM with unknown weights employing mean squared error can be decomposed as follows
\begin{align*}
    \mathbb{E}_{te}\Big[\|h(x) - y\|^2\Big] &= \mathbb{E}_{tr}\Big[\|h(x) - y\|^2\frac{p_{te}(\bm{x})}{p_{tr}(\bm{x})}\Big] \nonumber \\
    &= \mathbb{E}_{tr}\Big[\|h(x) - y\|^2(w^{(\lambda,\alpha)}(\bm{x}) + \frac{p_{te}(\bm{x})}{p_{tr}(\bm{x})} -w^{(\lambda,\alpha)}(\bm{x}))\Big] \nonumber \\
    &= \mathbb{E}_{tr}\Big[\|h(x) - y\|^2 w^{(\lambda,\alpha)}(\bm{x})\Big] + \mathbb{E}_{tr}\Big[\|h(x) - y\|^2\Big(\frac{p_{te}(\bm{x})}{p_{tr}(\bm{x})} - w^{(\lambda,\alpha)}(\bm{x})\Big)\Big].
\end{align*}
The first term is the weighted squared error term.
We can also give an upper bound on the second term as follows
\footnotesize
\begin{align}
    \mathbb{E}_{tr}\Big[\|h(x) - y\|^2\Big(\frac{p_{te}(\bm{x})}{p_{tr}(\bm{x})} - w^{(\lambda,\alpha)}(\bm{x})\Big)\Big] &\leq \sqrt{\mathbb{E}_{tr}\Big[\|h(x) - y\|^4\Big]\mathbb{E}_{tr}\Big[\Big(\frac{p_{te}(\bm{x})}{p_{tr}(\bm{x})} - w^{(\lambda,\alpha)}(\bm{x})\Big)^2\Big]} \nonumber \\
    &\leq \frac{1}{2}\Biggl(\mathbb{E}_{tr}\Big[\|h(x) - y\|^4\Big] + \mathbb{E}_{tr}\Big[\Big(\frac{p_{te}(\bm{x})}{p_{tr}(\bm{x})} - w^{(\lambda,\alpha)}(\bm{x})\Big)^2\Big]\Biggr) \nonumber
\end{align}
\normalsize
From the fact that both statistical bias and geometric bias are zero for IWERM, we can expect there exist functions $\varphi$, $\varphi_g$, $\varphi_c$ such that
\begin{equation*}
    \varphi(b_g, b_c) = \varphi_g(b_g) + \varphi_c(b_c) \propto \mathbb{E}_{te}\Big[\Big(\frac{p_{tr}(\bm{x})}{p_{te}(\bm{x})} - w^{(\lambda,\alpha)}(\bm{x})\Big)^2\Big].
\end{equation*}
Since the statistical bias is a scalar and the geometric bias is a vector, $\varphi$ is a function that aggregates informative geometric bias to a scalar.
A possible future study would be to identify the specific form of $\varphi$.
\fi

Intuitively, the geometric bias reveals in which direction the two parameters are misaligned.
IWERM, which corresponds to AIWERM and RIWERM with $\lambda=1$, is optimal when the sample size is large enough, but in real problems with limited sample size, it is often desirable to adopt a point between $\bm{\theta}_{tr}$ and $\bm{\theta}_{te}$. AIWERM and RIWERM consider distinct curves and specify a point on them by the parameter $\lambda$. 
Our geometric analysis revealed that these curves are included in the set of curves represented by dual $f$-representation of the parameter coordinate system, and the geometric biases of these particular cases (AIWERM and RIWERM) are identified. The results presented in this subsection do not claim superiority of a particular method and are of importance in their own right as a geometric analysis of the covariate shift method. 

\subsection{Optimization of the generalized IWERM}
\label{subsec:optimization_of_the_generalized_iwerm}
The existing covariate shift adaptation methods described above can be regarded as having determined a good ``weighting direction'' in some sense in advance and then the ``weighting magnitude'' is adjusted according to the parameter $\lambda$.
This approach is very convenient in terms of computational efficiency since the only optimized parameter is $\lambda\in[0,1]$.

However, geometrically, these methods only consider certain curves on the manifold as candidate solutions, as can be seen from Figure~\ref{fig:geometry_of_covariate_shift}, which means that the solution space is very small.

Our information geometrical IWERM (IGIWERM) can handle all curves $\gamma_{\alpha}(\lambda)$ in $\Pi_{(p_{tr}, p_
{te})}$ that connect $p_{tr}(\bm{x})$ and $p_{te}(\bm{x})$, by adding only one parameter.
For example, by setting $\alpha\in[1,3]$, shaded area in Figure~\ref{fig:geometry_of_covariate_shift} can be used as the solution space. The problem of how to determine $\lambda$ and $\alpha$ remains. 

\subsubsection{Information criterion}
When the predictive model is of a simple parametric form, information criterion derived in~\citep{Shimodaira2000-vv} is available (see appendix of~\citep{Shimodaira2000-vv} for the proof.):
\begin{theo}[Information criterion for IGIWERM]
Let the information criterion for IWERM be
\begin{equation}
    IC_{GW} \coloneqq -2L_1(\hat{\bm{\theta}}) + 2tr(J_{w}H_w^{-1}), \label{eq:information_criterion}
\end{equation}
where $L_1(\bm{\theta}) = \sum^{n_{tr}}_{i=1}
dr(\bm{x}^{tr}_i) 
\log p(y^{tr}_i | \bm{x}^{tr}_i, \bm{\theta}), \;
dr(\bm{x}) = \frac{p_{te}(\bm{x})}{p_{tr}(\bm{x)}}
$ and
\begin{align*}    
    J_w &= -\mathbb{E}_{p_{tr}}\Biggl[
    dr(\bm{x})
    \frac{\partial \log p(y|\bm{x},\bm{\theta})}{\partial\bm{\theta}}\Biggr|_{\bm{\theta}^*_w} 
    \times
    \frac{\partial\Big(\frac{m_f^{\lambda, \alpha}(p_{tr}(x), p_{te}(\bm{x}))}{p_{tr}(\bm{x})}
    \log p(y | \bm{x},\bm{\theta})
    \Big)
    }{\partial\bm{\theta}'}\Biggr|_{\bm{\theta}^*_w}\Biggr] \\
    H_w &= \mathbb{E}_{p_{tr}}\Biggl[\frac{\partial^2\Big( \frac{m_f^{\lambda, \alpha}(p_{tr}(x), p_{te}(\bm{x}))}{p_{tr}(\bm{x})} \log p(y | \bm{x},\bm{\theta})\Big)}{\partial\bm{\theta}\partial\bm{\theta}'}\Biggr].
\end{align*}
Here, $\bm{\theta}^*_w$ is the minimizer of the weighted empirical risk. 
The matrices $J_w$ and $H_w$ may be replaced by their consistent estimates. 
Then, $IC_{GW}/2n$ is an unbiased estimator of the expected loss up to $O(n^{-1})$ term:
\begin{equation}
    \mathbb{E}_{p_{tr}}\Big[IC_{GW}/2n\Big] = \mathbb{E}_{p_{tr}}\Big[\ell_1(\hat{\bm{\theta}}_w)\Big] + o(n^{-1}).
\end{equation}
\end{theo}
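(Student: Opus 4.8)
The plan is to follow the information-criterion derivation of \citet{Shimodaira2000-vv} almost verbatim, the only structural change being that the weight driving the estimator is $w^{(\lambda,\alpha)}(\bm{x})$ rather than a power of the density ratio. Because $w^{(\lambda,\alpha)}$ does not depend on $\bm{\theta}$, it enters the $M$-estimation calculus in exactly the same places, and the general sandwich-type (Takeuchi-style) correction survives without the Fisher-identity collapse that would occur if the fitting weight and the evaluation weight coincided.

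First I would set up the asymptotics of the weighted maximum-likelihood estimator. Let $\bm{\theta}^*_w$ be the population minimizer of the $w^{(\lambda,\alpha)}$-weighted risk, i.e.\ the root of $\mathbb{E}_{p_{tr}}[w^{(\lambda,\alpha)}(\bm{x})\,\partial_{\bm{\theta}}\log p(y\mid\bm{x},\bm{\theta})]=\bm{0}$, which exists and is unique under the usual regularity conditions (identifiability, nonsingular $H_w$, third $\bm{\theta}$-derivatives of $\log p$ dominated by a $p_{tr}$-integrable envelope, and finiteness of the relevant moments of $w^{(\lambda,\alpha)}(\bm{x})$ and of $dr(\bm{x})$; here the covariate-shift support condition $\supp(p_{te})\subset\supp(p_{tr})$ is what makes $dr$ finite $p_{tr}$-a.s.). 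Expanding the empirical estimating equation about $\bm{\theta}^*_w$ then yields the standard first-order expansion $\hat{\bm{\theta}}_w-\bm{\theta}^*_w = -H_w^{-1}\,\tfrac{1}{n}\sum_{i=1}^{n_{tr}} w^{(\lambda,\alpha)}(\bm{x}^{tr}_i)\,\partial_{\bm{\theta}}\log p(y^{tr}_i\mid\bm{x}^{tr}_i,\bm{\theta}^*_w)+o_p(n^{-1/2})$, together with $\sqrt{n}$-consistency and an $O(n^{-1})$ bound on $\mathbb{E}[\hat{\bm{\theta}}_w-\bm{\theta}^*_w]$.

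Next I would quantify the optimism of the in-sample risk estimate. Writing $G_n(\bm{\theta})=\tfrac{1}{n}L_1(\bm{\theta})$ and $g(\bm{\theta})=\mathbb{E}_{p_{tr}}[G_n(\bm{\theta})]$, the target is $-g(\hat{\bm{\theta}}_w)=\mathbb{E}[\ell_1(\hat{\bm{\theta}}_w)]$ and its naive plug-in is $-G_n(\hat{\bm{\theta}}_w)=-L_1(\hat{\bm{\theta}}_w)/n$. A second-order Taylor expansion of $G_n(\hat{\bm{\theta}}_w)$ and of $g(\hat{\bm{\theta}}_w)$ about $\bm{\theta}^*_w$, followed by taking $\mathbb{E}_{p_{tr}}$, makes the zeroth-order terms cancel and shows the second-order terms are $o(n^{-1})$ (products of an $O_p(n^{-1/2})$ empirical-process fluctuation with the $O_p(n^{-1})$ quantity $(\hat{\bm{\theta}}_w-\bm{\theta}^*_w)^{\otimes 2}$). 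The only $O(n^{-1})$ contribution is the first-order cross term $\mathbb{E}\big[(\partial_{\bm{\theta}}G_n-\partial_{\bm{\theta}}g)(\bm{\theta}^*_w)^\top(\hat{\bm{\theta}}_w-\bm{\theta}^*_w)\big]$; substituting the first-order expansion and evaluating the per-sample covariance between $dr(\bm{x})\,\partial_{\bm{\theta}}\log p$ and $w^{(\lambda,\alpha)}(\bm{x})\,\partial_{\bm{\theta}}\log p$ at $\bm{\theta}^*_w$ turns it into exactly $\tfrac{1}{n}\tr(J_wH_w^{-1})$ with $J_w,H_w$ as stated. Hence $\mathbb{E}[-L_1(\hat{\bm{\theta}}_w)/n]=\mathbb{E}[\ell_1(\hat{\bm{\theta}}_w)]-\tfrac{1}{n}\tr(J_wH_w^{-1})+o(n^{-1})$, so adding $\tfrac{1}{n}\tr(J_wH_w^{-1})$ --- i.e.\ forming $IC_{GW}/2n$ --- removes the bias; and replacing $J_w,H_w$ by consistent estimates perturbs the correction by $o_p(1)\cdot O(n^{-1})$, which is absorbed in the remainder.

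The step I expect to be the main obstacle is the bookkeeping around the mismatch between the fitting weight $w^{(\lambda,\alpha)}$ and the evaluation weight $dr$: since $\bm{\theta}^*_w$ is $w^{(\lambda,\alpha)}$-optimal but not $dr$-optimal we have $\partial_{\bm{\theta}}g(\bm{\theta}^*_w)\neq\bm{0}$, the Fisher identity is unavailable, and one must verify that the surviving cross-covariance assembles into the asymmetric $J_w$ of the statement (rather than a symmetrized version) while keeping track of the $\bm{\theta}$-independence of the weights under differentiation. Everything else --- existence and uniqueness of $\bm{\theta}^*_w$, uniform control of the Taylor remainders via dominated third derivatives, and the $o(n^{-1})$ accounting --- is routine and is exactly the argument in the appendix of \citet{Shimodaira2000-vv}, which goes through unchanged once $w^{(\lambda,\alpha)}(\bm{x})=m_f^{(\lambda,\alpha)}(p_{tr}(\bm{x}),p_{te}(\bm{x}))/p_{tr}(\bm{x})$ is put in place of the density-ratio power weight.
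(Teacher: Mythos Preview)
Your proposal is correct and takes essentially the same approach as the paper: the paper does not give an independent proof but simply states that the criterion is the one ``derived in~\citep{Shimodaira2000-vv}'' and refers to the appendix of that work, so your plan to rerun Shimodaira's Takeuchi-style optimism calculation with $w^{(\lambda,\alpha)}$ in place of the density-ratio power is exactly what the paper intends. If anything, your sketch is more explicit than the paper's own treatment, and your identification of the fitting/evaluation weight mismatch as the source of the asymmetric $J_w$ is the right point to emphasize.
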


\begin{table*}[t]
\centering
\caption{Mean misclassification rates averaged over $10$ trails on LIBSVM benchmark datasets. 
The numbers in the brackets are the standard deviations.
For the methods with (optimal), the optimal parameters for the test data are obtained by linear search.}
\label{tab:libsvm_error}
\resizebox{\textwidth}{!}{%
\begin{tabular}{c|c|c|ccccc}
Dataset       & \#features & \#data  & unweighted          & IWERM               & AIWERM (optimal)   & RIWERM (optimal)     & ours                        \\ \hline
australian    & $14$       & $690$   & $33.46 (\pm 23.65)$ & $22.13 (\pm 3.37)$  & $21.98 (\pm 3.36)$ & $21.73 (\pm 3.82)$  & ${\bf 18.85 (\pm 3.99)}$ \\
breast-cancer & $10$       & $683$   & $38.28 (\pm 10.98)$ & $41.23 (\pm 15.39)$ & $36.41 (\pm 9.68)$ & $36.13 (\pm 10.81)$ & ${\bf 31.65 (\pm 8.49)}$ \\
heart         & $13$       & $270$   & $45.17 (\pm 6.98)$  & $39.94 (\pm 8.55)$  & $39.76 (\pm 8.49)$ & $39.76 (\pm 8.92)$  & ${\bf 35.37 (\pm 6.84)}$ \\
diabetes      & $8$        & $768$   & $33.19 (\pm 5.69)$  & $37.22 (\pm 6.63)$  & $33.11 (\pm 6.45)$ & $33.38 (\pm 5.74)$  & ${\bf 32.83 (\pm 5.62)}$ \\
madelon       & $500$      & $2,000$ & $47.78 (\pm 1.53)$  & $47.28 (\pm 2.20)$  & $47.10 (\pm 2.13)$ & $47.12 (\pm 1.65)$  & ${\bf 46.56 (\pm 2.12)}$ \\
\end{tabular}%
}
\end{table*}

\subsubsection{Bayesian optimization}
This information criterion does not work for complicated nonparametric models. As a method that can be applied in general situations, we consider using Bayesian optimization~\cite{Snoek2012-rs,Frazier2018-fm} to find the optimal weighting by IGIWERM. Bayesian optimization assumes that the target function is drawn from a prior distribution over functions, typically a Gaussian process, updating a posterior as we observe the target function value in new places. 
We use the validation loss as the target function:
\begin{equation}
    L(h;\lambda,\alpha) = \frac{1}{n_{val}}\sum^{n_{val}}_{i=1}\frac{p_{te}(\bm{x}_i^{val})}{p_{tr}(\bm{x}_i^{val})}\ell(h_{\lambda,\alpha}(\bm{x}^{val}_i), y^{val}_i). \label{eq:target_function}
\end{equation}
where $n_{val}$ is the validation sample size and $h_{\lambda,\alpha}$ is given by IWERM with $\lambda$ and $\alpha$.
This validation procedure is based on the importance weighted cross validation used in~\citep{Sugiyama2007-lr}.
In Bayesian optimization, an acquisition function $a(\lambda, \alpha |D)$ is used for measuring goodness of candidate point $(\lambda, \alpha)$ based on current dataset $D$. 
As the acquisition function, we adopt the expected improvement~\cite{Mockus1978-dh,Jones1998-xu}.
In this strategy, we choose the next query point which has the highest expected improvement over the current minimum target value. See Appendix~\ref{app:BO} for more detail. The overall picture is summarized in Algorithm~\ref{alg:bopt}.

\subsection{Learning guarantee}
Generalization bounds of weighted maximum likelihood estimator for the target domain are derived in~\cite{Cortes2010}, and our weight function~\eqref{eq:Gweight} is compatible with their bound. The weight defined by Eq.~\eqref{eq:Gweight} is bounded when $\alpha \neq 1$ and achieves a standard rate $O(n_{tr}^{-1/2})$. When $\alpha=1$, the weight is unbounded and its rate is $O(n_{tr}^{-3/8})$. Details are shown in Appendix~\ref{app:slt}.

\section{Numerical Experiments}
\label{sec:numerical_experiments}
In this section, we present experimental results of domain adaptation problems under covariate shift using both synthetic and real data\footnote{Source code to reproduce the results is available from~ \url{https://github.com/nocotan/IGIWERM}}.
Since the main purpose of the experiments is to see the effect of our generalization of the importance weighted ERM and comparison to the proposed and conventional IWERM methods, in all experiments, we assume that $p_{tr}$ and $p_{te}$ are known as detailed in Section~\ref{subsec:inducing_covariate_shift}.

\subsection{Induction of Covariate Shift}
\label{subsec:inducing_covariate_shift}

Since each dataset is composed of data points generated from independent and identical distributions, we need to artificially induce covariate shifts.
We induce the covariate shift as follows~\cite{Cortes2008-zd}:
\begin{enumerate}
    \item As a preprocessing step, we perform Z-score standardization on all input data.
    \item Then, an example $(\bm{x}, y)$ is assigned to the training dataset with probability $\exp(v)/(1 + \exp(v))$ and to the test dataset with probability $1/(1 + \exp(v))$, where $v = 16\bm{w}^T\bm{x}/\sigma$ with $\sigma$ being the standard deviation of $\bm{w}^T\bm{x}$ determined by using the given dataset, and $\bm{w}\in\mathbb{R}^d$ is a given projection vector. Here, the projection vector $\bm{w}$ is given randomly for each experimental process.
\end{enumerate}
By this construction of the training and test datasets, $p_{tr}$ and $p_{te}$ are explicitly determined as
\begin{align*}
    p_{tr}(\bm{x}) &= \frac{\exp(16\bm{w}^T\bm{x}/\sigma)}{1 + \exp(16\bm{w}^T\bm{x}/\sigma)}, \\
    p_{te}(\bm{x}) &= \frac{1}{1 + \exp(16\bm{w}^T\bm{x}/\sigma)},
\end{align*}
when the projection vector $\bm{w}\in\mathbb{R}^d$ is given.
Although density ratio estimation could be employed in our experiments, we assume that the distribution is known in order to compare the performance of the proposed method without relying on the accuracy of the density or density ratio estimation.

\subsection{Illustrative Example in Regression}
\label{subsec:dummy_experiment}
Here, we predict the response $y\in\mathbb{R}$ using ordinary linear regression:
    $y = \beta_0 + \beta_1x + \varepsilon,\ \varepsilon\sim\mathcal{N}(0,\sigma^2)$,
where $\mathcal{N}(a,b)$ denotes the normal distribution with mean $a$ and variance $b$.
In the numerical example below, we assume the true $p(y|x)$ given as $
    y = x^2 + \varepsilon,\ \varepsilon\sim\mathcal{N}(0,5).
$
The $p_{tr}(x)$ and $p_{te}(x)$ of the covariate $x$ are $
    x^{tr}\sim\mathcal{N}(0,5),\ x^{te}\sim\mathcal{N}(-5,0.5).
$
The training sample size is $n_{tr}=1000$ and the test sample size is $n_{te}=300$.
The left-hand side of Fig.~\ref{fig:dummy_experiment} shows the data to be generated.
We can see that $p_{tr}(x)\neq p_{te}(x)$.

The right panel of Fig.~\ref{fig:dummy_experiment} shows the results of fitting by unweighted ERM, IWERM, and IGIWERM.
Here, the parameters of IGIWERM are explored by using  Algorithm~\ref{alg:bopt}, as shown in Fig.~\ref{fig:bopt}.
The coordinates of the purple circles are the parameters explored by Bayesian optimization, and the radius of the purple circles is proportional to the goodness $r(\bm{\beta})$ of the parameters (inverse of the MSE):
    $r(\bm{\beta}) = \left(\frac{1}{n}\sum^n_{i=1}(y_i - h(x_i, \bm{\beta}))^2\right)^{-1}$.
By choosing the size $r(\beta)$ of the plot for each point in this manner, the better-evaluated parameters can be plotted in larger circles.
From this figure, it can be seen that our generalized weighting is not restricted to lying just on two curves corresponding to AIWERM and RIWERM. 

For the normal linear regression, the information criterion~\eqref{eq:information_criterion} is calculated from
\begin{align*}
&IC_{GW}(\lambda,\alpha) = \frac{1}{2}\sum^{n_{tr}}_{i=1}
dr(\bm{x}_{i}^{tr})
\Biggl\{\frac{\hat{\epsilon}^2_1}{\hat{\sigma}^2 + \log(2\pi\hat{\sigma}^2)}\Biggr\}+\\
    & \sum^{n_{tr}}_{i=1}
    dr(\bm{x}_{i}^{tr})
    \Biggl\{\frac{\hat{\epsilon}^2_i}{\hat{\sigma}^2}\hat{h}_i + \frac{m_f^{\lambda, \alpha}(p_{tr}(\bm{x}^{tr}), p_{te}(\bm{x}^{tr}))}{2\hat{c}_w p_{tr}(\bm{x}^{tr})}\Big(\frac{\hat{\epsilon}^2_i}{\hat{\sigma}^2} - 1\Big)^2\Biggr\}.
\end{align*}
Here, $\hat{c}_w=\sum^n_{i=1}\frac{m_f^{\lambda, \alpha}(p_{tr}(x_i^{tr}), p_{te}(x_i^{tr}))}{p_{tr}(x_i^{tr})}$, $\hat{\sigma}^2=\sum^n_{i=1}\frac{m_f^{\lambda, \alpha}(p_{tr}(x_i^{tr}), p_{te}(x_i^{tr}))}{p_{tr}(x_i^{tr})}\hat{\epsilon}_i^2/\hat{c}_w$ and $\hat{\epsilon}_i$ is the residual.
Table~\ref{tab:dummy_mse} shows that the IGIWERM outperforms existing methods.

\subsection{Experiments on binary classification problem}
We show the results of our experiments on the LIBSVM dataset\footnote{\href{https://www.csie.ntu.edu.tw/~cjlin/libsvmtools/datasets/}{https://www.csie.ntu.edu.tw/~cjlin/libsvmtools/datasets/}}.

In the experiments, we randomly generate a mapping vector $\bm{w}$ for each trial and perform $10$ trials for each dataset.
We use SVM with Radial Basis Function (RBF) kernel as the base classifier. In this experiment, the parameters $\lambda$ of AIWERM and RIWERM are chosen optimally by linear search using the test data. The experimental results on benchmark datasets are summarized in Table~\ref{tab:libsvm_error}.
The table shows that the proposed IGIWERM outperforms the conventional methods even when the parameters of those methods are optimized by using the test dataset.
More experimental results on other datasets with various models are reported in Appendix~\ref{app:exp}.

\subsection{Computational Cost}
Here, we investigate the computational cost of our IGIWERM.
The experimental setup is the same as in Section~\ref{subsec:dummy_experiment}.
The mean and standard deviation of the computation time obtained in the 10 trials are shown in Table~\ref{tab:computational_cost}.
From this table, we can see that our IGIWERM takes constant times longer to compute than the vanilla ERM.

\begin{table}[th]
    \centering
    \caption{Computational cost of ERM and IGIWERM.}
    \begin{tabular}{c|c}
         Method  & Computation time [sec] \\ \hline
         ERM     & $1.130(\pm 0.238)$     \\
         IGIWERM & $9.887(\pm 0.845)$     \\
    \end{tabular}
    \label{tab:computational_cost}
\end{table}

\section{Conclusion and Discussion}
We generalized existing methods of covariate shift adaptation in the geometrical framework. By our information geometrical formulation, geometric biases of conventional methods are elucidated. 
Unlike the dominant approaches restricted to a specific curve on a manifold in the literature, our generalization has a much larger solution space with only two parameters.
Our experiments highlighted the advantage of our method over previous approaches, suggesting that our generalization can achieve better performance than the existing methods. A drawback of our proposed method is its relatively high computational cost for optimizing parameters $\alpha$ and $\lambda$. We used Bayesian optimization for efficient parameter search, and further sophisticated approaches would be explored in our future work.

As mentioned in the introduction, the importance weighting is used with deep neural network models~\citep{DBLP:conf/nips/FangL0S20}, in which the importance weight in the feature representation obtained by DNN is considered. It is also worth mentioning that \citet{DBLP:conf/aaai/SakaiS19} used RIWERM in the study of covariate shift on the learning from positive and unlabeled data. 
Our generalization will be applicable to their methods to improve the performance under a small sample regime. In particular, in a standard approach for optimizing the implicit weight function $w(\bm{x})$, it is common to add a regularization term $(w(\bm{x})- p_{tr}(\bm{x}) / p_{te}(\bm{x}))^2$ to the optimization objective. The use of the derived geodesic and curvature biases to regularize the optimal weight function will be investigated in connection with the modern weight learning approach using deep neural network models. Finally, the relation between geometric bias and statistical bias should be explored.

\subsection*{Acknowledgement}
Part of this work is supported by JST CREST JPMJCR1761, JPMJCR2015, JSPS KAKENHI 17H01793, JP22H03653 and NEDO (JPNP18002). Finally, we express our special thanks to the anonymous reviewers whose valuable comments helped to improve the manuscript.

\section*{Appendix A: Statistical Manifolds and Straight Line in Exponential Family}
\label{app:mfd}
Let $M$ be a $d$-dimensional differentiable manifold with a Riemannian metric $g$. For each $\bm{x} \in M$, $T_{\bm{x}} M$ is its tangent space. 

\begin{defi}
Let $g_{\bm{x}}$ an inner product 
\begin{equation}
    g_{\bm{x}} : T_{\bm{x}}(M) \times T_{\bm{x}}(M) \to \mathbb{R} \quad \forall \bm{x} \in M.
\end{equation}
When, for any $X, Y \in M$, the map $\bm{x} \to g_{\bm{x}}(X_{\bm{x}},Y_{\bm{x}})$ is differentiable with respect to $\bm{x} \in M$, $g_{\bm{x}}$ is denoted as the Riemannian metric.
\end{defi}

The correspondence $X: M \ni \bm{x} \mapsto X_{\bm{x}} \in T_{\bm{x}}M$ is called a vector field on $M$. For $\bm{x} \in M$, let  coordinate expression of $X_{\bm{x}}$ be $X_{\bm{x}} = (v^1(\bm{x}), \dots,v^{d}(\bm{x}))$. Then, $v^{i}(\bm{x})$ defines a real-valued function $v^{i}$ on $M$ and $X$ is expressed as $X=(v^1,\dots,v^d)$. 
When a function on $M$ is $k$ times continuously differentiable, it is called the class $C^k$, and the set of all functions of class $C^{k}$ on $M$ is denoted as $C^{k}(M)$.
A vector field $X$ is called class $C^{k}$ when all of $v^i, \; i=1,\dots,d$ are class $C^{k}$. The set of all class $C^{\infty}$ vector fields is denoted as $\mathfrak{X}(M)$. A tangent space $T_{\bm{x}}(M)$ is a vector space spanned by differentials $\frac{\partial}{\partial x^i}$, namely,
\begin{equation}
    T_{\bm{x}}(M) = 
    \left\{
    a^i
    \left(
    \frac{\partial}{\partial x^{i}}
    \right)_{\bm{x}}
    \middle|
    \forall \; a_i  \in \mathbb{R}
    \right\}.
\end{equation}
Following the notational convention of differential geometry, we use $\partial_i = \frac{\partial}{\partial x^{i}}$ and the Einstein summation convention. The vector field on a manifold $M$ is then written as
\begin{equation}
    \mathfrak{X}(M) = 
    \left\{
    v^{i} \partial_{i} 
    \middle|
    \; v^i \in C^{\infty}(M)
    \right\}.
\end{equation}
For $X \in \mathfrak{X}(M)$ and $f \in C^{\infty}(M)$, $fX \in \mathfrak{X}(M)$ is defined by $(fX)_{\bm{x}} = f(\bm{x}) X_{\bm{x}}, \; (\bm{x} \in M)$. Differential of a function $f$ with respect to a vector field $X$ is denoted as $Xf \in C^{\infty}(M)$ and defined by $(Xf)(\bm{x}) = X_{\bm{x}}(f), \; (\bm{x} \in M)$. When two vector fields are expressed as $X=v^i \partial_{i}$ and $Y=u^i \partial_{i}$, we have 
\begin{equation}
X(Yf) - Y(Xf) = (v^j \partial_j u^i - u^j \partial_j v^i) \partial_i f.
\end{equation}
The commutator product of $X$ and $Y$ is defined as $[X,Y] \in \mathfrak{X}(M), \; [X,Y]f = (XY-YX)f$, and 
\begin{equation}
[X,Y] =  (v^j \partial_j u^i - u^j \partial_j v^i) \partial_i.
\end{equation}

\begin{defi}
Consider a map $\nabla: \mathfrak{X}(M) \times \mathfrak{X}(D) \to \mathfrak{X}(M)$ which assigns a pair of vectors $(X,Y) \in \mathfrak{X}(M)\times \mathfrak{X}(M)$ to a vector $\nabla_{Y}X \in \mathfrak{X}(M)$. $\nabla_{Y}X$ is called a covariant derivative of $X$ with respect to $Y$, and $\nabla$ is called an affine connection when the following conditions hold for any $X,Y,Z \in \mathfrak{X}(M)$ and $f \in C^{\infty}(M)$:
\begin{itemize}
    \item $\nabla_{Y+Z}X=\nabla_{Y}X + \nabla_{Z}X$
    \item $\nabla_{fX}X = f \nabla_{Y}X$
    \item $\nabla_{Z}(X+Y) = \nabla_{Z}X + \nabla_{Z}Y$
    \item $\nabla_{Y}(fX) = (Yf)X + f\nabla_Y X$
\end{itemize}
\end{defi}

\begin{defi}
Let $\nabla$ be an affine connection on $M$, and define a map
\begin{align}
\notag
    T : \mathfrak{X}(M) \times \mathfrak{X}(M) \to & \mathfrak{X}(M)\\
    (X,Y)  \mapsto & T(X,Y) = \nabla_X Y - \nabla_Y X - [X,Y].
\end{align}
The map $T$ is called the torsion tensor field of $\nabla$. When $T=0$ for all $X,Y \in \mathfrak{X}(M)$, the connection $\nabla$ is called torsion-free.
\end{defi}

For an affine connection, the Christoffel symbol $\Gamma^{k}_{ij} \in C^{\infty}(M)$ is defined by
\begin{equation}
    \nabla_{\partial_i} \partial_j = 
    \Gamma^{k}_{ij} \partial_k.
\end{equation}
With this formula, the connection and the Christoffel symbol are often identified. 
The affine connection $\nabla$ is torsion-free when and only when $\Gamma^k_{ij} = \Gamma^k_{ji}$. 

Suppose a manifold $M$ is equipped with a Riemannian metric $g$. When
\begin{equation}
    X g(Y,Z) = 
    g( \nabla_{X} Y,Z)
    +
    g(Y,\nabla_{X} Z)
\end{equation}
holds for all $X,Y,Z \in \mathfrak{X}(M)$, the connection $\nabla$ is called a metric connection. In general, an affine connection is not a metric connection, but there uniquely exists an affine connection $\nabla^{\ast}$ which satisfies
\begin{equation}
    X g(Y,Z) = 
    g(\nabla_X Y,Z) +
    g(Y,\nabla^{\ast}_X Z).
\end{equation}
The connection $\nabla^{\ast}$ is called the dual connection of $\nabla$.

Given a Riemannian metric $g$, another reperesentation of the Christoffel symbol is given by
\begin{equation}
    \Gamma_{ij,k} = 
    g
    \left(
    \nabla_{\partial_i} \partial_j, \partial_k
    \right).
\end{equation}

\begin{defi}
When an affine connection $\nabla$ is torsion-free and a metric connection with respect to the Riemannian metric $g$, it is called a Levi-Civita connection with respect to the metric $g$.
\end{defi}

In general, when a $(0,3)$-tensor $\bar{T}$ is given in addition to an affine connection $\nabla$ and a Riemannian metric $g$, an alternative connection $\tilde{\nabla}$ is defined as 
\begin{equation}
    g(\tilde{\nabla}_{Y} X,Z) =
    g(\nabla_{Y}X,Z) + 
    \bar{T}(X,Y,Z).
\end{equation}

Let $\Omega$ be a set for which probability measure is defined, and define a $d$-dimensional statistical model 
\begin{equation}
    S = \{
    p(\cdot ; \bm{\xi}) | \bm{\xi} \in \Xi
    \},
\end{equation}
where the parameter space $\Xi$ is isomorphic to $\mathbb{R}^d$. As a Riemannian metric associated with the statistical model $S$, we consider the Fisher metric defined as
\begin{equation}
    g_{ij}(\bm{\xi}) = 
    \mathbb{E}_{\bm{\xi}}
    [
    (\partial_i l_{\bm{\xi}})
    (\partial_j l_{\bm{\xi}})
    ],
\end{equation}
where $\mathbb{E}_{\bm{\xi}} [\cdot]$ is expectation with respect to a probability density $p(\cdot; \bm{\xi})$ and $l_{\bm{\xi}}(x) = \log p(x;\bm{\xi})  \; (x \in \Omega)$ is the log-likelihood. 
Now, consider a $(0,3)$-tensor $\bar{T}$ on $S$ defined by
\begin{equation}
    (\bar{T})_{ijk} (\bm{\xi})
    =
    \sum_{x \in \Omega}
    (\partial_{i} l_{\bm{\xi}}(x))
    (\partial_{j} l_{\bm{\xi}}(x))
    (\partial_{k} l_{\bm{\xi}}(x))
    p(x;\bm{\xi}),
\end{equation}
and based on the Levi-Civita connection $\nabla$ associated with the Fisher metric $g$ on $S$, we define a affine connection $\nabla^{(\alpha)}$ by
\begin{equation}
    g(\nabla^{(\alpha)}_{Y}X,Z) =
    g(\nabla_Y X,Z) - 
    \frac{\alpha}{2}
    \bar{T}(X,Y,Z), \quad 
    (X,Y,Z \in \mathfrak{X}(S)).
\end{equation}
This connection is called the $\alpha$-connection. The Christoffel symbols associated with connections $\nabla$ and $\nabla^{(\alpha)}$ are 
\begin{align*}
    \Gamma_{ij,k} =&
    \mathbb{E}_{\bm{\xi}}
    \left[
    \left\{
    \partial_i \partial_j l_{\bm{\xi}} + 
    \frac{1}{2} (\partial_i l_{\bm{\xi}})
    (\partial_j l_{\bm{\xi}})
    \right\}
    (\partial_k l_{\bm{\xi}})
    \right],\\
     \Gamma_{ij,k}^{(\alpha)} =&
    \mathbb{E}_{\bm{\xi}}
    \left[
    \left\{
    \partial_i \partial_j l_{\bm{\xi}} + 
    \frac{1-\alpha}{2} (\partial_i l_{\bm{\xi}})
    (\partial_j l_{\bm{\xi}})
    \right\}
    (\partial_k l_{\bm{\xi}})
    \right].
\end{align*}
From $\Gamma_{ij,k}^{(\alpha)} = \Gamma_{ji,k}^{(\alpha)}
$, the $\alpha$-connection is torsion-free. 
Note that the dual connection of $\nabla^{(\alpha)}$ is $\nabla^{(-\alpha)}$, and it also holds that 
\begin{equation}
    \nabla^{(\alpha)} = 
    \frac{1+\alpha}{2} \nabla^{\ast} + 
    \frac{1-\alpha}{2} \nabla.
\end{equation}

\begin{defi}
\label{def:curvature}
For an affine connection $\nabla$ of a manifold $M$, a map
\begin{align*}
    R : \mathfrak{X}(M) \times \mathfrak{X}(M)
    \times \mathfrak{X}(M)
    \to & \mathfrak{X}(D)\\
    (X,Y,Z) \mapsto & R(X,Y)Z = 
    \nabla_X \nabla_Y Z 
    -
    \nabla_Y \nabla_X Z
    -
    \nabla_{[X,Y]}Z
\end{align*}
is called the curvature tensor field of the connection $\nabla$.
\end{defi}
The curvature tensor is expressed with coordinate and the Christoffel symbol as
\begin{equation}
    R(\partial_i,\partial_j)\partial_k 
    =
    (\partial_i \Gamma_{jk}^l 
    -
    \partial_j 
    \Gamma^l_{ik}) \partial_l 
    +
    (
    \Gamma^l_{jk} \Gamma^{m}_{il}
    -
    \Gamma^{l}_{ik} 
    \Gamma^{m}_{jl}
    )
    \partial_m.
\end{equation}

\begin{defi}
When both the torsion and curvature are zero, the connection $\nabla$ is said to be flat.
\end{defi}

Let $\gamma$ be a map from a close interval $I$ to a manifold $M$. The map $\gamma$ is parameterized by a real-valued parameter $t \in I$ as $\gamma(t)$ and called a curve on $M$. When the value of $\gamma$ at two endpoints of $I$ is fixed, the shortest path between these two points is defined by using the variational principle. 
The pararell shift of $\frac{d \gamma}{d t}$ along with $\gamma$ is expressed as
\begin{equation}
    \nabla_{\frac{d}{dt}}^{\gamma} \frac{d \gamma}{dt}
    =
    \left(
    \frac{d^2 \gamma_k}{d t^2}
    +
    (\Gamma^k_{ij} \circ \gamma) 
    \frac{d \gamma_i}{dt}
    \frac{d \gamma_j}{dt}
    \right) \partial_k.
\end{equation}

\begin{defi}
An equation
\begin{equation}
    \nabla_{\frac{d}{dt}}^{\gamma}\frac{d \gamma}{dt} = \bm{0}
\end{equation}
is called the geodesic equation, and the curve satisfying this equation is called a geodesic.
\end{defi}

Note that if $\Gamma_{ij,k} = 0 \; \forall i,j,k$, the geodesic equation is of the form $\frac{d^2 \gamma_{k}}{d t^2} = 0$, hence the geodesic is a straight line.

\begin{defi}
Let $S$ be a $d$-dimensional statistical model. When each element of the model in $S$ is represented by 
\begin{equation}
    p(x;\bm{\theta}) = 
    \exp
    \left(
    k(x) + 
    \theta^i F_i (x) - \psi(\bm{\theta})
    \right),
\end{equation}
by using functions $k,F_1,\dots,F_d : \Omega \to \mathbb{R}$ and $\psi : \Theta \to \mathbb{R}$, the statistical model $S$ is called an exponential family, and $\bm{\theta}$ is called the natural parameter of the model. 
\end{defi}
Note that in a general statistical model $S$, $\xi$, and $\Xi$ are often used as its parameter and the parameter space, while for an exponential family, $\theta$ and $\Theta$ are often used to represent its parameter and the parameter space.
Consider an exponential family with $\alpha$ connection $\nabla^{(\alpha)}$. The Christoffel symbols are
\begin{equation}
    \Gamma^{(\alpha)}_{ij,k}
    =
    \mathbb{E}_{\bm{\theta}}
\left[
\left\{
\partial_i \partial_j l_{\bm{\theta}} + 
\frac{1-\alpha}{2}
(\partial_i l_{\bm{\theta}})
(\partial_j l_{\bm{\theta}})
\right\}
(\partial_k l_{\bm{\theta}})
\right],
\end{equation}
and 
\begin{equation}
    \partial_i l_{\bm{\theta}} = 
    F_i (x) - (\partial_i \psi)(\bm{\theta}), \quad 
    (\partial_i \partial_j \psi) (\bm{\theta}).
\end{equation}
So, when $\alpha=1$, we have
\begin{equation}
\Gamma^{(1)}_{ij,k}
=
\mathbb{E}_{\bm{\theta}} 
[
(- ( \partial_i \partial_j \psi)(\bm{\theta}))
(\partial_k l_{\bm{\theta}})
] = 0,
\end{equation}
namely, the exponential family is flat with the Fisher metric and $\alpha=1$ connection. 
This implies that in exponential family, for the $\alpha=1$-connection $\nabla^{(1)}$ associated with the Fisher metric, the geodesic between two points correspond to natural parameters $\bm{\theta}_1$ and $\bm{\theta}_{2}$ is of the form $t \bm{\theta}_1 + (1-t) \bm{\theta}_2$.


\section*{Appendix B: Proofs of main results}
\label{app:prf}
\begin{proof}[Derivation of the information geometrically generalized IWERM]
Let $h_A$ be a hypothesis generated by AIWERM.
From Lemma 4.1, we can write
\begin{align}
    \hat{h}_A &= \min_{h\in\mathcal{H}}\int_{\mathcal{X}\times\mathcal{Y}}\ell(h(\bm{x}), y)p^{(\lambda)}_A(\bm{x})p_{tr}(y|\bm{x})d\bm{x}dy \nonumber \\
    &= \min_{h\in\mathcal{H}}\int_{\mathcal{X}\times\mathcal{Y}}\ell(h(\bm{x}), y)m_f^{(\lambda, 1)}(p_{tr}(\bm{x}), p_{te}(\bm{x}))p_{tr}(y|\bm{x})d\bm{x}dy \nonumber \\
    &= \min_{h\in\mathcal{H}}\int_{\mathcal{X}\times\mathcal{Y}}\ell(h(\bm{x}), y)\frac{m_f^{(\lambda, 1)}(p_{tr}(\bm{x}), p_{te}(\bm{x}))}{p_{tr}(\bm{x})}p_{tr}(\bm{x}, y)d\bm{x}dy.
\end{align}
From Lemma 4.2, we also have
\begin{align}
    \hat{h}_R = \min_{h\in\mathcal{H}}\int_{\mathcal{X}\times\mathcal{Y}}\ell(h(\bm{x}), y)\frac{m_f^{(\lambda, 3)}(p_{tr}(\bm{x}), p_{te}(\bm{x}))}{p_{tr}(\bm{x})}p_{tr}(\bm{x}, y)d\bm{x}dy.
\end{align}
Then, we consider
\begin{equation}
    \hat{h} = \min_{h\in\mathcal{H}}\int_{\mathcal{X}\times\mathcal{Y}}w^{(\lambda,\alpha)}(\bm{x})\ell(h(\bm{x}), y)p_{tr}(\bm{x},y)d\bm{x}dy, 
\end{equation}
where
\begin{equation}
    w^{(\lambda,\alpha)}(\bm{x}) = 
    \frac{
    m^{(\lambda,\alpha)}_{f}(p_{tr}(\bm{x}),p_{te}(\bm{x}))
    }{p_{tr}(\bm{x})}.
\end{equation}
We can see that AIWERM is a special case when $\alpha=1$ and RIWERM is a special case when $\alpha=3$.
\end{proof}

\begin{proof}[Proofs of Propositions~\ref{prop:geometric_bias_aiwerm} and ~\ref{prop:geometric_bias_riwerm}]
Let
\begin{equation}
    \bm{\theta}^{(\lambda,\alpha)} = m^{(\lambda, \alpha)}_f(\bm{\theta}_{tr}, \bm{\theta}_{te}),
\end{equation}
and let $R^{(\alpha)}$ be the Riemann curvature tensor defined in Definition~\ref{def:curvature} with respect to the $\alpha$-connection $\nabla^{(\alpha)}$.

We define the relative curvature tensor as
\begin{equation}
R^{(\alpha,\beta)}(X,Y,Z) = \Big[\nabla^{(\alpha)}_X, \nabla^{(\beta)}_Y\Big]Z - \nabla^{(\alpha)}_{[X,Y]}Z
\end{equation}
and
the difference tensor as
\begin{equation}
K(X,Y) = \nabla^*_XY - \nabla_XY.
\end{equation}

For any $\alpha\in\mathbb{R}$ and $\beta\in\mathbb{R}$, we have
\begin{align}
    \nabla^{(\alpha)}_X\nabla^{(\beta)}_Y Z &= \Big(\frac{1+\alpha}{2}\nabla^*_X + \frac{1-\alpha}{2}\nabla_X\Big)\Big(\frac{1+\beta}{2}\nabla^*_Y + \frac{1-\beta}{2}\nabla_Y\Big)Z \nonumber\\
    &= \frac{(1+\alpha)(1+\beta)}{4}\nabla^*_X\nabla^*_YZ + \frac{(1+\alpha)(1-\beta)}{4}\nabla^*_X\nabla_YZ \nonumber\\
    &\ \quad + \frac{(1-\alpha)(1+\beta)}{4}\nabla_X\nabla^*_YZ+\frac{(1-\alpha)(1-\beta)}{4}\nabla_X\nabla_YZ. \\
    \nabla^{(\beta)}_Y\nabla^{(\alpha)}_X &= \Big(\frac{1+\beta}{2}\nabla^*_Y + \frac{1-\beta}{2}\nabla_Y\Big)\Big(\frac{1+\alpha}{2}\nabla^*_X + \frac{1-\alpha}{2}\nabla_X\Big)Z \nonumber \\
    &= \frac{(1+\alpha)(1+\beta)}{4}\nabla^*_Y\nabla^*_XZ + \frac{(1-\alpha)(1+\beta)}{4}\nabla^*_Y\nabla_XZ \nonumber\\
    &\ \quad + \frac{(1+\alpha)(1-\beta)}{4}\nabla_Y\nabla^*_XZ + \frac{(1-\alpha)(1-\beta)}{4}\nabla_Y\nabla_XZ.\\
    \nabla^{(\alpha)}_{[X,Y]}Z &= \frac{1+\alpha}{2}\nabla^*_{[X,Y]}Z + \frac{1-\alpha}{2}\nabla_{[X,Y]}Z.
\end{align}
Then
\begin{align}
    R^{(\alpha,\beta)}(X,Y,Z) &= \nabla^{(\alpha)}_X\nabla^{(\beta)}_Y Z - \nabla^{(\beta)}_X\nabla^{(\alpha)}_Y Z - \nabla^{(\alpha)}_{[X,Y]} Z\nonumber \\
    &= \frac{(1+\alpha)(1+\beta)}{4}(\nabla^*_X\nabla^*_Y - \nabla^*_Y\nabla^*_X)Z \nonumber\\
    &\ \quad + \frac{(1+\alpha)(1-\beta)}{4}(\nabla^*_X\nabla_Y - \nabla_Y\nabla^*_X)Z \nonumber\\
    &\ \quad + \frac{(1-\alpha)(1+\beta)}{4}(\nabla_X\nabla^*_Y - \nabla^*_Y\nabla_X)Z \nonumber\\
    &\ \quad + \frac{(1-\alpha)(1-\beta)}{4}(\nabla_X\nabla_Y - \nabla_Y\nabla_X)Z \nonumber\\
    &\ \quad -\frac{1+\alpha}{2}\nabla^*_{[X,Y]}Z - \frac{1-\alpha}{2}\nabla_{[X,Y]}Z \nonumber\\
    &= \frac{(1+\alpha)(1+\beta)}{4}\Big\{R^*(X,Y,Z) + \nabla^*_{[X,Y]}Z\Big\} \nonumber\\
    &\ \quad + \frac{(1+\alpha)(1-\beta)}{4}\Big\{R^{(1,-1)}(X,Y,Z)+\nabla^*_{[X,Y]}Z\Big\} \nonumber\\
    &\ \quad + \frac{(1-\alpha)(1+\beta)}{4}\Big\{R^{(-1,1)}(X,Y,Z) + \nabla_{[X,Y]}Z\Big\} \nonumber\\
    &\ \quad + \frac{(1-\alpha)(1-\beta)}{4}\Big\{R^{(-1,-1)}(X,Y,Z) + \nabla^*_{[X,Y]}Z\Big\} \nonumber\\
    &\ \quad - \frac{1+\alpha}{2}\nabla^*_{[X,Y]}Z - \frac{1-\alpha}{2}\nabla_{[X,Y]}Z \\ 
    4R^{(\alpha,\beta)} &= (1+\alpha)(1+\beta)R^* + (1-\alpha)(1-\beta)R \nonumber\\
    &\ \quad + (1+\alpha)(1-\beta)R^{(1,-1)} + (1-\alpha)(1+\beta)R^{(-1,1)}.
\end{align}
We also have
\begin{align}
    K^{(\alpha,\beta)}(X,Y) &= \nabla^{(\beta)}_XY - \nabla^{(\alpha)}_XY \nonumber\\
    &= \Big\{\frac{1+\beta}{2}\nabla^*_XY + \frac{1-\beta}{2}\nabla_XY\Big\} - \Big\{\frac{1+\alpha}{2}\nabla^*_XY + \frac{1-\alpha}{2}\nabla_XY\Big\} \nonumber\\
    &= \frac{\beta-\alpha}{2}\nabla^*_XY - \frac{\beta-\alpha}{2}\nabla_XY = \frac{\beta-\alpha}{2}K(X, Y) \\
    K^{(\alpha,\beta)}\Big(X, K^{(\alpha,\beta)}(Y,Z)\Big) &= \frac{\beta-\alpha}{2}K\Big(X, K^{(\alpha,\beta)}(Y,Z)\Big) = \frac{(\beta-\alpha)^2}{4}K\Big(X, K(Y,Z)\Big).
\end{align}
Combining them, the following relations hold:
\begin{align}
    K^{(\beta,\alpha)}\Big(X, K^{(\beta,\alpha)}(Y,Z)\Big) &= K^{(\beta,\alpha)}\Big(X, \nabla^{(\alpha)}_YZ - \nabla^{(\beta)}_YZ\Big) \\
    &= K^{(\beta,\alpha)}\Big(X, \nabla^{(\alpha)}_YZ \Big) - K^{(\beta,\alpha)}\Big(X,\nabla^{(\beta)}_YZ\Big) \nonumber \\
    &= \nabla^{(\alpha)}_X\nabla^{(\alpha)}_YZ - \nabla^{(\beta)}_X\nabla^{(\alpha)}_YZ - \nabla^{(\alpha)}_X\nabla^{(\beta)}_YZ + \nabla^{(\beta)}_X\nabla^{(\beta)}_YZ \\
    \frac{(\alpha-\beta)^2}{4}K\Big(X, K(Y,Z)\Big) &= \nabla^{(\alpha)}_X\nabla^{(\alpha)}_YZ - \nabla^{(\beta)}_X\nabla^{(\alpha)}_YZ - \nabla^{(\alpha)}_X\nabla^{(\beta)}_YZ + \nabla^{(\beta)}_X\nabla^{(\beta)}_YZ.
\end{align}
Swapping $X$ and $Y$, we have
\begin{align}
    &\ \frac{(\alpha-\beta)^2}{4}\Biggl\{K\Big(X, K(Y,Z)\Big) - K\Big(Y, K(X,Z)\Big)\Biggr\} \nonumber\\
    &\ = R^{(\alpha)}(X,Y,Z) + R^{(\beta)}(X,Y,Z) - \Big\{\Big[\nabla^{(\alpha)}_X, \nabla^{(\beta)}_Y\Big]Z - \nabla^{(\alpha)}_{[X,Y]}Z\Big\} \nonumber\\
    &\ \quad - \Big\{\Big[\nabla^{(\beta)}_X,\nabla^{(\alpha)}_Y\Big]Z - \nabla^{(\beta)}_{[X,Y]}Z\Big\} \nonumber \\
    &= R^{(\alpha)}(X,Y,Z) + R^{(\beta)}(X,Y,Z) - R^{(\alpha,\beta)}(X,Y,Z) - R^{(\beta,\alpha)}(X,Y,Z).
\end{align}
Making $\alpha=\beta$, we have
\begin{align}
    4R^{(\alpha)} &= (1+\alpha)^2R^* + (1-\alpha)^2R + (1-\alpha^2)R^{(1,-1)} + (1-\alpha^2)R^{(-1,1)} \nonumber\\
    &= (1+\alpha^2)R^* + (1-\alpha)^2R + (1-\alpha^2)\Big(R^{(1,-1)} + R^{(-1,1)}\Big). \label{eq:relative_curvature1}
\end{align}
Making $\alpha=1$ and $\beta=-1$, we also have
\begin{align}
    R^{(1,-1)}(X,Y,Z) + R^{(-1,1)}(X,Y,Z) &= R^*(X,Y,Z) + R(X,Y,Z) \nonumber \\
    &\ \quad - \Biggl\{K\Big(X, K(Y,Z)\Big) - K\Big(Y, K(X,Z)\Big)\Biggr\}. \label{eq:relative_curvature2}
\end{align}
From Eq.~\eqref{eq:relative_curvature1} and \eqref{eq:relative_curvature2}, we obtain
\begin{align}
    4R^{(\alpha)} &= (1+\alpha)^2R^*(X,Y,Z) + (1-\alpha)^2R(X,Y,Z) \nonumber\\
    &\ \quad + (1-\alpha^2)R^*(X,Y,Z) + (1-\alpha^2)\Biggl\{K\Big(X, K(Y,Z)\Big) - K\Big(Y, K(X,Z)\Big)\Biggr\} \nonumber \\
    &= 2(1+\alpha)R^*(X,Y,Z) + 2(1-\alpha)R(X,Y,Z) \nonumber\\
    &\ \quad + (1-\alpha^2)\Biggl\{K\Big(Y, K(X,Z)\Big) - K\Big(X, K(Y,Z)\Big)\Biggl\}
\end{align}
Since the exponential family is dually flat, that is $R=0$ and $R^*=0$, and the Riemann curvature tensor with respect to $\nabla^{(\alpha)}$ is
\begin{align}
    R^{(\alpha)}(X,Y,Z) &= \frac{1-\alpha^2}{4}\Lambda, \\
    \label{eq:Lambda}
    \Lambda &= \Big(K(Y, K(X,Z)) - K(X, K(Y,Z)) \Big).
\end{align}
Then, the geometric bias vector of $\bm{\theta}^{(\lambda,\alpha)}$ is
\begin{align}
    b(\alpha, \lambda) = (1-\lambda)\Big\{e_1 + tr_g\Big(\frac{1-\alpha^2}{2}\Lambda_{ikj}d\bm{\theta}^i \otimes d\bm{\theta}^j\Big)e_2\Big\},
\end{align}
where $tr_g$ is the trace operation on the metric tensor $g$, and $\Lambda_{ikj}$ is the element of $\Lambda$ in Eq.~\eqref{eq:Lambda}. 
Since AIWERM and RIWERM are two special cases for $\alpha=1$ and $\alpha=3$, we have
\begin{align}
    b(1, \lambda) &= (1-\lambda)e_1, \\
    b(3, \lambda) &= (1-\lambda)\Big\{e_1 + tr_g\Big(-4\Lambda_{ikj}d\bm{\theta}^i \otimes d\bm{\theta}^j\Big)e_2\Big\}.
\end{align}

\end{proof}

\section*{Appendix C: Learning guarantee}
\label{app:slt}
Generalization bounds of weighted maximum likelihood estimator for the target domain are derived in~\cite{Cortes2010}, and our weight function~\eqref{eq:Gweight} is compatible with their bound. 

Then, the gap between the expected (with respect to test distribution) loss $\mathcal{R}(h)$ and empirical risk $L(h;\lambda,\alpha)$ is bounded as
\begin{align}
\notag
&
| \mathcal{R}(h) 
    - L(h;\lambda,\alpha) |
    \leq 
    \left|
    \mathbb{E}_{p_{tr}}
    \left[
    \left\{
    \frac{p_{te}(\bm{x})}{p_{tr}(\bm{x})}
    -
    w^{(\lambda,\alpha)}(\bm{x})
    \right\}
    \right]
    \right|
    \ell(h(\bm{x},y(\bm{x})))\\ \notag
    +&
    2^{5/4}
    \max 
    \left(
    \sqrt{\mathbb{E}_{p_{tr}}
    (w^{(\lambda,\alpha)}(\bm{x}))^2 
    \ell^2 (h(\bm{x},y(\bm{x})))
    },
        \sqrt{\mathbb{E}_{\hat{p}_{tr}}
    (w^{(\lambda,\alpha)}(\bm{x}))^2 
    \ell^2 (h(\bm{x},y(\bm{x}))
    }
    \right)
    \\
    \times &
    \left(
    \frac{
    p \log \frac{2 n_{tr} e}{p}
    +
    \log \frac{4}{\delta}
    }{
    n_{tr}
    }\right)^{\frac{3}{8}}.
\end{align}
In the above inequality, $p$ is the pseudo-dimension of the function class $\{w^{\lambda,\alpha}(\bm{x}) \ell(h(\bm{x}),y(\bm{x}))| h \in \mathcal{H} \}$ where $y(\bm{x})$ is the ground truth function of connecting $\bm{x}$ and $y$ as $y=y(\bm{x})$. The first term of the r.h.s. of the above inequality is the bias introduced by using $w^{\lambda,\alpha}$ instead of a standard density ratio, and the second term reflects the variance. It is worth mentioning that the term $\mathbb{E}_{p_{tr}}
    (w^{(\lambda,\alpha)}(\bm{x}))^2 
    \ell^2 (h(\bm{x},y(\bm{x})))$ is further bounded by $d_{2}(p_{te}||p_{tr}) = \int_{x\in \mathcal{X}} \frac{p_{te}^2 (\bm{x})}{p_{tr}(\bm{x})} d\bm{x}$.

\section*{Appendix D: Optimization of the generalized IWERM}
\label{app:BO}

In the expected improvement strategy, the $t+1^{th}$ point $(\lambda,\alpha)_{t+1}$ is selected according to the following equation.
\begin{small}
\begin{equation*}
    (\lambda,\alpha)_{t+1} = \argmin_{(\lambda,\alpha)}\mathbb{E}\Big[\max\Big(0, h_{t+1}(\lambda,\alpha) - L(\lambda^\dagger,\alpha^\dagger)\Big) \Big| D_t\Big],
\end{equation*}
\end{small}
where $L(\lambda^{\dagger},\alpha^{\dagger})$ is the maximum value of empirical risk that has been encountered so far, $h_{t+1}(\lambda,\alpha)$ is the posterior mean of the surrogate at the $t+1^{th}$ step and $D_t = \{(\lambda,\alpha)_i, L(\lambda_i,\alpha_i)\}^t_{i=1}$.
This equation for Gaussian process surrogate is an analytical expression:
\begin{align*}
    a_{EI}(\lambda,\alpha) &= (\mu_t(\lambda,\alpha) - L(\lambda^{\dagger},\alpha^{\dagger}))\Phi(Z) + \sigma_t(\lambda,\alpha)\phi(Z), \\
    Z &= \frac{\mu_t(\lambda,\alpha) - L(\lambda^{\dagger},\alpha^\dagger)}{\sigma_t(\lambda,\alpha)},
\end{align*}
where $\Phi(\cdot)$ and $\phi(\cdot)$ are normal cumulative and density functions, respectively, and $\mu_t$ and $\sigma_t$ are mean and standard deviation of $\{(\lambda,\alpha)_i\}^t_{i=1}$.

\section*{Appendix E: Additional experimental results}
\label{app:exp}
\subsection{Experimental results on LIBSVM dataset}
We show that for the LIBSVM dataset, IGIWERM is effective even for multiple models.
Table~\ref{tab:libsvm_several_models_error} shows the results for each model.
We use the scikit-learn~\cite{scikit-learn} implementation of the models, and the hyperparameters of each model are the default values of this library.
Figure~\ref{fig:gird_search_surface} also shows the relationship between the two parameters of IGIWERM and the errors that can be achieved.
For this visualization, we explore the parameter pairs by grid search and evaluate their performance at that time. From this figure, it is seen that the best performance is often achieved when $\alpha \neq 1$ and $\alpha \neq 3$, showing the sub-optimality of conventional methods. 

\subsection{Experimental results on regression problems}
In this section, we present experimental results for the regression problem.
All the datasets used in this experiment are available in the scikit-learn~\cite{scikit-learn} dataset collection.
We use SVR with Radial Basis Function (RBF) kernel as the base regressor.
Table~\ref{tab:sklearn_regression} shows the results of this experiment.
In this experiment, we use MSE as a metric, and this table shows that IGIWERM is superior to existing covariate shift adaptation methods.
Figure~\ref{fig:gird_search_surface_regression} shows the relationship between the two parameters of IGIWERM and the mean squared errors that can be achieved.
This figure shows that, as in the case of binary classification, the optimal parameters do not necessarily match those of existing methods.

\subsection{Experimental results on multi-class classification}
We also introduce the additional experimental results for the multi-class classification problem.
All the datasets used in this experiment are available in the scikit-learn~\cite{scikit-learn} dataset collection.
We also use the scikit-learn~\cite{scikit-learn} implementation of the models, and the hyperparameters of each model are the default values of this library. We note that the number of training sample for {\tt{covtype}} is so large hence the results with SVM for this dataset are omitted. Table~\ref{tab:sklearn_multi_class} shows the experimental results, and we see that our proposed generalization outperforms existing methods.
Figure~\ref{fig:gird_search_surface_multi_class} shows the relationship between the two parameters of IGIWERM and the errors that can be achieved.
This figure also shows the sub-optimality of conventional methods.

\subsection{Visualization of covariate shift}
In Section 5, we induce the covariate shift by the method of Cortes et al~\cite{Cortes2008-zd}.
Figure~\ref{fig:covariate_shift_libsvm} shows a plot by PCA of each dataset splitted into the training set and test set.
This figure shows that we are able to induce a covariate shift by partitioning the dataset.

\begin{table*}[t]
\centering
\caption{Mean misclassification rates averaged over 10 trails on LIBSVM benchmark datasets. The numbers in the brackets are the
standard deviations. For the methods with (optimal), the optimal parameters for the test data are obtained by the linear search.
The lowest misclassification rates among five methods are shown with bold.}
\label{tab:libsvm_several_models_error}
\resizebox{\textwidth}{!}{%
\begin{tabular}{c|c|ccccc}
\toprule
Dataset      &model              &unweighted         &IWERM              &AIWERM (optimal)       &RIWERM (optimal)         &ours                    \\ \midrule
australian   &Logistic Regression&$22.76 (\pm 9.53)$ &$22.75 (\pm 7.73)$ &$22.19 (\pm 9.35)$     &$22.39 (\pm 7.76)$       &${\bf 21.47 (\pm 8.91)}$\\
             &SVM                &$33.46 (\pm 23.65)$&$22.13 (\pm 3.37)$ &$21.98 (\pm 3.36)$     &$21.73 (\pm 3.82)$       &${\bf 18.85 (\pm 3.99)}$\\
             &AdaBoost           &$10.20 (\pm 5.09)$ &$16.35 (\pm 10.67)$&$11.23 (\pm 3.49)$     &$15.47 (\pm 2.72)$       &${\bf 9.15 (\pm 3.93)}$ \\
             &Naive Bayes        &$17.04 (\pm 6.02)$ &$14.92 (\pm 5.96)$ &$16.33 (\pm 6.07)$     &$15.49 (\pm 6.07)$       &${\bf 14.74(\pm 5.84)}$ \\
             &Random Forest      &$8.89 (\pm 3.72)$  &$8.64 (\pm 3.20)$  &${\bf 8.29 (\pm 3.21)}$&$8.38 (\pm 3.47)$        &$8.61 (\pm 3.73)$       \\ \midrule
breast-cancer&Logistic Regression&$32.32 (\pm 3.08)$ &$32.87 (\pm 3.56)$ &$32.32 (\pm 3.08)$     &$32.32 (\pm 3.08)$       &${\bf 32.26 (\pm 3.04)}$\\
             &SVM                &$38.28 (\pm 10.98)$&$41.23 (\pm 15.39)$&$36.41 (\pm 9.68)$     &$36.13 (\pm 10.81)$      &${\bf 31.65 (\pm 8.49)}$\\
             &AdaBoost           &$5.09 (\pm 1.65)$  &$5.63 (\pm 1.38)$  &$6.01  (\pm 1.13)$     &$5.95 (\pm 1.50)$        &${\bf 4.84  (\pm 1.50)}$\\
             &Naive Bayes        &$11.27 (\pm 4.09)$ &$19.65 (\pm 15.14)$&$10.36 (\pm 5.14)$     &$18.00 (\pm 14.24)$      &${\bf 10.03 (\pm 3.53)}$\\
             &Random Forest      &$3.32 (\pm 1.23)$  &$3.19 (\pm 1.10)$  &$3.19 (\pm 1.13)$      &$3.18 (\pm 1.04)$        &${\bf 3.13 (\pm 1.04)}$\\ \midrule
heart        &Logistic Regression&$39.68 (\pm 7.90)$ &$40.55 (\pm 9.10)$ &$39.96 (\pm 7.40)$     &$39.94 (\pm 6.93)$       &${\bf 36.56 (\pm 8.29)}$\\
             &SVM                &$45.17 (\pm 6.98)$ &$39.94 (\pm 8.55)$ &$39.76 (\pm 8.49)$     &$39.74 (\pm 8.92)$       &${\bf 35.37 (\pm 6.84)}$\\
             &AdaBoost           &$30.87 (\pm 12.04)$&$29.24 (\pm 6.47)$ &$29.37 (\pm 12.19)$    &$31.27 (\pm 8.37)$       &${\bf 26.96(\pm 13.12)}$\\
             &Naive Bayes        &$22.79 (\pm 6.17)$ &$24.78 (\pm 7.99)$ &$22.87 (\pm 6.02)$     &$24.58 (\pm 7.98)$       &${\bf 21.97 (\pm 6.41)}$\\
             &Random Forest      &$20.87 (\pm 6.64)$ &$20.98 (\pm 6.62$  &$20.96 (\pm 6.67)$     &$21.96 (\pm 6.70)$       &${\bf 19.95 (\pm 6.61)}$\\ \midrule
diabetes     &Logistic Regression&$37.62 (\pm 4.35)$ &$40.22 (\pm 4.10)$ &$38.38 (\pm 3.85)$     &$40.11 (\pm 3.74)$       &${\bf 36.86 (\pm 4.81)}$\\
             &SVM                &$33.19 (\pm 5.69)$ &$37.22 (\pm 6.63)$ &$33.11 (\pm 6.45)$     &$33.38 (\pm 5.74)$       &${\bf 32.83 (\pm 5.62)}$\\
             &AdaBoost           &$37.69 (\pm 4.28)$ &$40.13 (\pm 5.28)$ &$40.76 (\pm 4.31)$     &$41.26 (\pm 5.16)$       &${\bf 33.45 (\pm 4.35)}$\\
             &Naive Bayes        &$39.29 (\pm 3.98)$ &$39.21 (\pm 3.18)$ &$39.26 (\pm 2.97)$     &$39.35 (\pm 2.85)$       &${\bf 38.10 (\pm 4.02)}$\\
             &Random Forest      &$30.09 (\pm 3.03)$ &$30.90 (\pm 3.52)$ &$31.07 (\pm 3.10)$     &$30.51 (\pm 3.67)$       &${\bf 29.46 (\pm 2.99)}$\\ \midrule
madelon      &Logistic Regression&$47.31 (\pm 1.80)$ &$47.80 (\pm 1.57)$ &$47.16 (\pm 1.68)$     &$46.81 (\pm 1.56)$       &${\bf 46.31 (\pm 1.69)}$\\
             &SVM                &$47.78 (\pm 1.53)$ &$47.28 (\pm 2.20)$ &$47.10 (\pm 2.13)$     &$47.12 (\pm 1.65)$       &${\bf 46.56 (\pm 2.12)}$\\
             &AdaBoost           &$42.92 (\pm 1.40)$ &$42.91 (\pm 1.68)$ &$43.36 (\pm 1.81)$     &$42.90 (\pm 1.40)$       &${\bf 40.64 (\pm 7.32)}$\\
             &Naive Bayes        &$41.90 (\pm 1.05)$ &$41.43 (\pm 8.76)$ &$41.79 (\pm 8.05)$     &$41.62 (\pm 7.43)$       &${\bf 41.03 (\pm 8.32)}$\\
             &Random Forest      &$35.90 (\pm 0.83)$ &$35.42 (\pm 1.75)$ &$35.13 (\pm 1.30)$     &${\bf 34.79 (\pm 1.75)}$ &$35.56 (\pm 2.03)$      \\
\bottomrule
\end{tabular}%
}
\end{table*}

\begin{figure*}[t]
    \centering
    \includegraphics[scale=0.42]{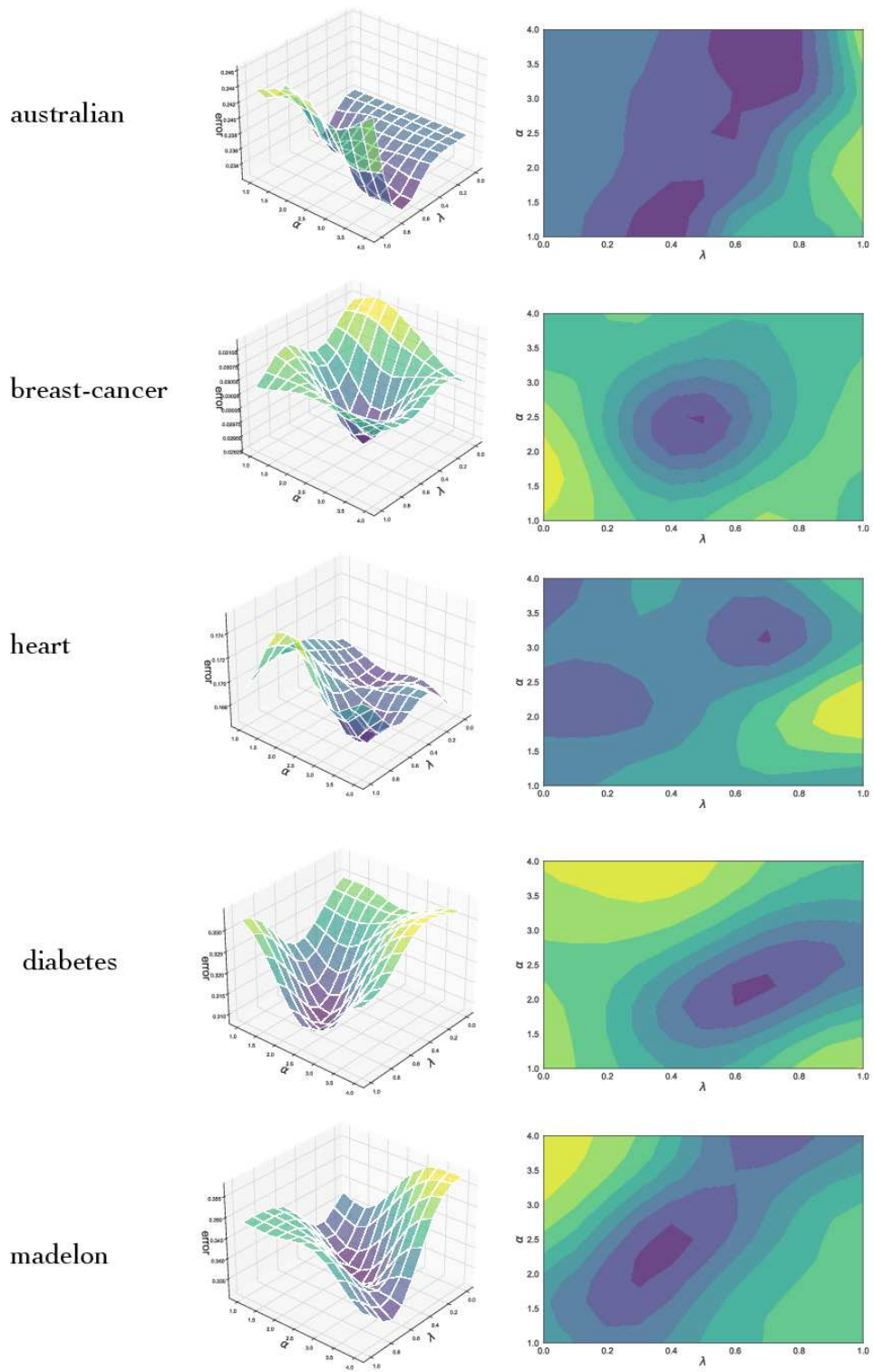}
    \caption{Visualization of grid search for $\alpha$ and $\lambda$ on LIBSVM dataset.
    For the sake of clarity, we apply a moving average.}
    \label{fig:gird_search_surface}
\end{figure*}


\begin{table*}[t]
\centering
\caption{Mean squared errors averaged over 10 trails on scikit-learn~\cite{scikit-learn} regression benchmark datasets. The numbers in the brackets are the
standard deviations. For the methods with (optimal), the optimal parameters for the test data are obtained by the linear search.
The lowest mean squared errors are shown with bold.}
\label{tab:sklearn_regression}
\resizebox{\textwidth}{!}{%
\begin{tabular}{c|c|c|ccccc}
\toprule
Dataset            & \#features & \#data  & unweighted          & IWERM               & AIWERM (optimal)   & RIWERM (optimal)    & ours                    \\ \midrule
boston             & $13$       & $506$   & $83.22 (\pm 5.72)$  & $69.87 (\pm 2.31)$ & $69.68 (\pm 1.46)$ & $69.96 (\pm 1.84)$ & ${\bf 68.36 (\pm 1.20)}$\\
diabetes           & $10$       & $442$   &$0.049 (\pm 0.007)$  &$0.0501 (\pm 0.009)$ &$0.049(\pm 0.008)$  &$0.049 (\pm 0.009)$  &${\bf 0.048(\pm 0.007)}$ \\
california housing & $8$        & $20,640$&$1.432 (\pm 0.095)$  &$1.3214 (\pm 0.345)$ &$1.260(\pm 0.125)$  &$1.261 (\pm 0.086)$  &${\bf 1.232(\pm 0.095)}$ \\

\bottomrule
\end{tabular}%
}
\end{table*}

\begin{figure*}[t]
    \centering
    \includegraphics[scale=0.55, bb=0 0 765 527]{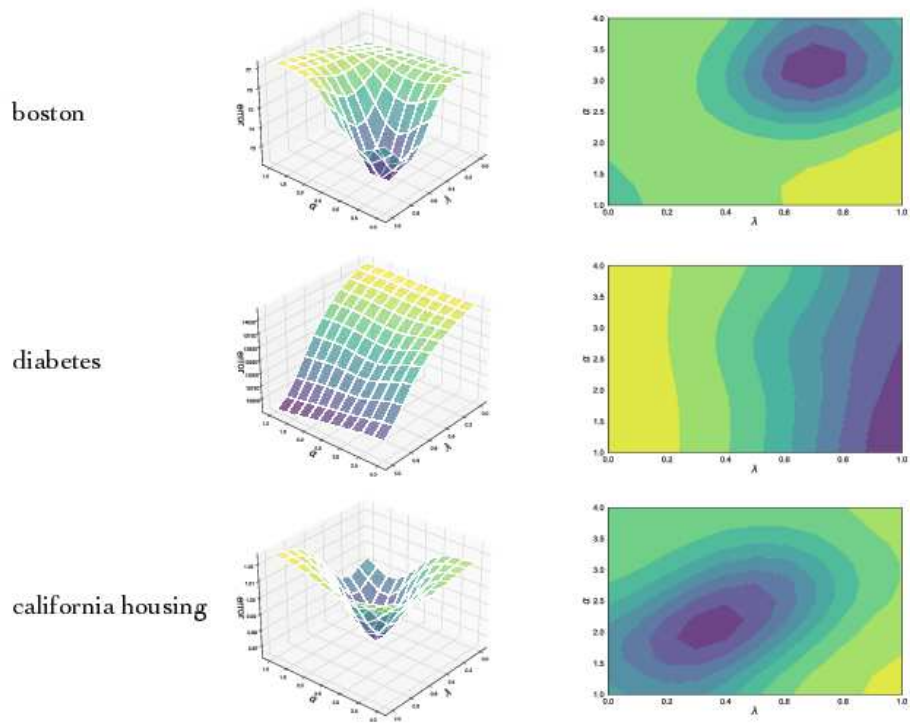}
    \caption{Visualization of grid search for $\alpha$ and $\lambda$ on scikit-learn regression dataset.
    For the sake of visualization, we apply a moving average.}
    \label{fig:gird_search_surface_regression}
\end{figure*}


\begin{table*}[t]
\centering
\caption{Mean misclassification rates averaged over 10 trails on scikit-learn~\cite{scikit-learn} multi-class classification benchmark datasets. The numbers in the brackets are the
standard deviations. For the methods with (optimal), the optimal parameters for the test data are obtained by the linear search.
The lowest misclassification rates among five methods are shown with bold.}
\label{tab:sklearn_multi_class}
\resizebox{\textwidth}{!}{%
\begin{tabular}{c|c|ccccc}
\toprule
Dataset & model               & unweighted         & IWERM               & AIWERM (optimal)    & RIWERM (optimal)    & ours                   \\ \midrule
digits  & Logistic Regression & $6.92 (\pm 2.25)$  & $7.10 (\pm 1.76)$   & $6.90 (\pm 1.81)$   & $6.89 (\pm 1.80)$   & ${\bf 6.79 (\pm 1.82)}$\\
        & SVM                 & $4.21 (\pm 2.04)$  & $3.94 (\pm 1.14)$   & $4.20 (\pm 1.22)$   & $4.02 (\pm 1.18)$   & ${\bf 3.88 (\pm 1.11)}$\\
        & AdaBoost            & $67.98(\pm 12.35)$ & $71.77(\pm 6.25)$   & $71.26 (\pm 8.21)$  & $70.47 (\pm 6.46)$  & ${\bf 65.20 (\pm 8.20)}$\\
        & Naive Bayes         & $19.07(\pm 2.45)$  & $18.68 (\pm 2.61)$  & $19.31 (\pm 2.64)$  & $18.78 (\pm 2.68)$  & ${\bf 18.58 (\pm 2.66)}$\\
        & Random Forest       & $6.85 (\pm 2.40)$  & $6.30 (\pm 1.80)$   & $6.23 (\pm 1.53)$   & $6.27 (\pm 1.64)$   & ${\bf 6.17 (\pm 1.90)} $\\ \midrule
iris    & Logistic Regression & $54.69(\pm 20.51)$ & $36.49 (\pm 22.39)$ & $45.78 (\pm 18.09)$ & $35.37 (\pm 23.19)$ & ${\bf 28.89 (\pm 20.54)}$\\
        & SVM                 & $55.16(\pm 22.60)$ & $36.65 (\pm 22.56)$ & $33.21 (\pm 20.25)$ & $30.46 (\pm 21.14)$ & ${\bf 29.04 (\pm 20.02)}$\\
        & AdaBoost            & $27.19(\pm 22.56)$ & $26.00 (\pm 22.98)$ & $26.00 (\pm 22.98)$ & $26.00 (\pm 22.98)$ & ${\bf 19.62 (\pm 21.36)}$\\
        & Naive Bayes         & $35.88 (\pm 23.24)$& $35.97 (\pm 26.79)$ & $37.99 (\pm 23.55)$ & $33.84 (\pm 25.64)$ & ${\bf 27.52 (\pm 21.00)}$\\
        & Random Forest       & $26.16 (\pm 22.86)$& $32.17 (\pm 21.21)$ & $26.00 (\pm 22.98)$ & $28.47 (\pm 22.63)$ & ${\bf 22.77 (\pm 21.74)}$\\ \midrule
covtype & Logistic Regression & $45.36 (\pm 13.08)$& $32.04 (\pm 5.833)$ & $30.62 (\pm 4.64)$  & $25.20 (\pm 8.99)$  & ${\bf 19.99 (\pm 5.56)} $\\
        & SVM                 & $-        $        & $-        $         & $-        $         & $-        $         & $-                      $\\
        & AdaBoost            & $47.53 (\pm 14.51)$& $25.55 (\pm 12.27)$ & $25.47 (\pm 14.14)$ & $27.86 (\pm 10.37)$ & ${\bf 18.96 (\pm 7.29)} $\\
        & Naive Bayes         & $41.13 (\pm 15.65)$& $30.66 (\pm 15.09)$ & $28.48 (\pm 15.66)$ & $27.20 (\pm 15.79)$ & ${\bf 19.64 (\pm 15.62)}$\\
        & Random Forest       & $23.51(\pm 3.31)$  & $18.18 (\pm 2.01)$  & $17.28 (\pm 2.05)$  & $17.13 (\pm 2.26)$  & ${\bf 16.42 (\pm 2.08)}$\\
\bottomrule
\end{tabular}%
}
\end{table*}

\begin{figure*}[t]
    \centering
    \includegraphics[scale=0.55, bb=0 0 755 518]{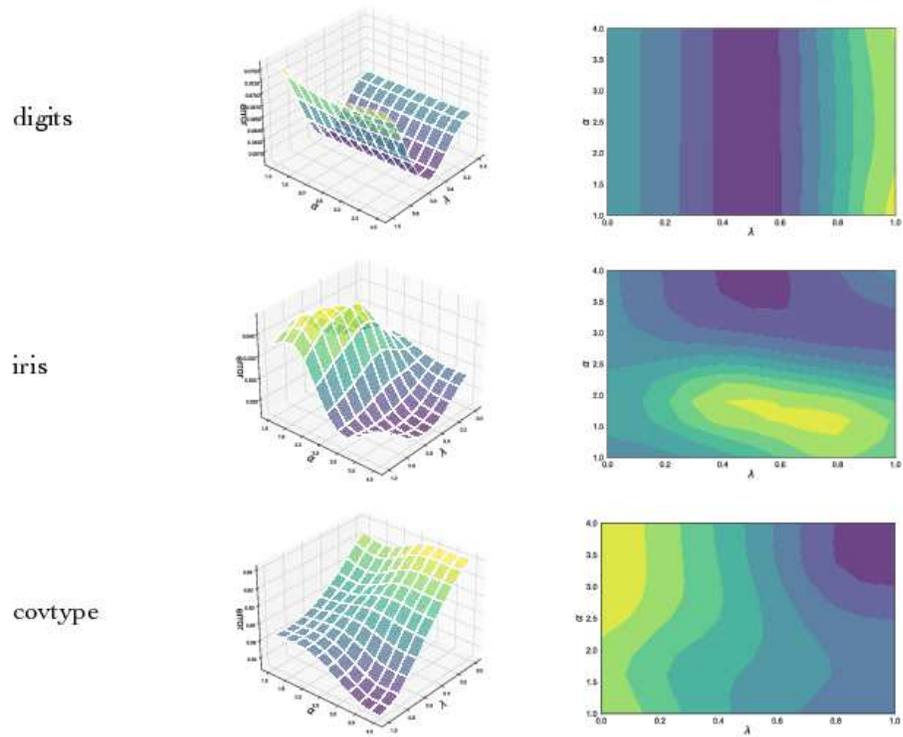}
    \caption{Visualization of grid search for $\alpha$ and $\lambda$ on scikit-learn multi-class classification dataset.
    For the sake of visualization, we apply a moving average.}
    \label{fig:gird_search_surface_multi_class}
\end{figure*}

\begin{figure*}[t]
    \centering
    \includegraphics[scale=0.45, bb=0 310 960 1415]{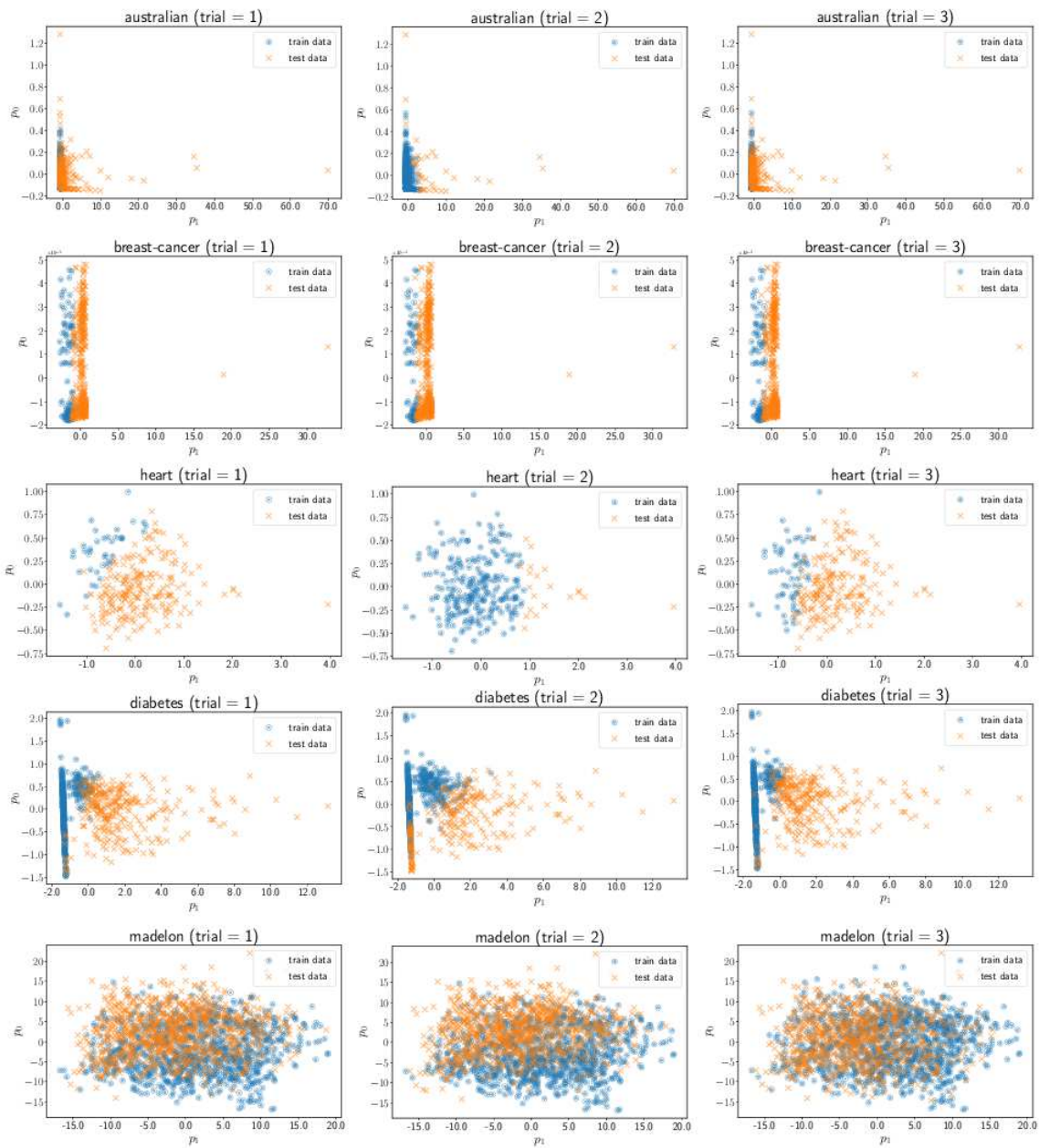}
    \caption{Plot of covariate shifts using the method of Cortes et al~\cite{Cortes2008-zd}. Each dataset is included in LIBSVM and mapped to two dimensions by PCA.}
    \label{fig:covariate_shift_libsvm}
\end{figure*}

\end{document}